\newtheorem{theorem}{Theorem}
\newtheorem{lemma}{Lemma}
\newtheorem{cor}{Corollary}
\setlist{topsep=0.5em}
\newlength\figureheight
\newlength\figurewidth
\def\*#1{\bm{#1}}
\newcommand{\defeq}{\vcentcolon=}
\newcommand{\bigsmid}{\ \big\vert\ }
\newcommand{\bigsetdef}[2]{\big\{ #1 \bigsmid #2 \big\}}
\newcommand*\LET[2]{\STATE #1 $\gets$ #2}
\newcommand{\argmax}{\operatornamewithlimits{argmax}}
\newcommand{\LLC}{$L$-Lipschitz continuous}
\newcommand{\Reps}{R_{\epsilon}}
\newcommand{\Rbeps}{\bar{R}_{\epsilon}}
\newcommand{\Rbo}{\bar{R}_{0}}
\newcommand{\T}{T_t}
\newcommand{\algo}{{\sc\textsf{StageOpt}}\xspace}
\newcommand{\safeopt}{{\sc\textsf{SafeOpt}}\xspace}
\newcommand{\kersparring}{{\sc\textsf{KernelSelfSparring}}\xspace}
\icmltitlerunning{Stagewise Safe Bayesian Optimization with Gaussian Processes}
\begin{document}

\twocolumn[
% \icmltitle{A General Framework for Safe Bayesian Optimization \\(with Gaussian Processes)}
\icmltitle{Stagewise Safe Bayesian Optimization with Gaussian Processes}

% It is OKAY to include author information, even for blind
% submissions: the style file will automatically remove it for you
% unless you've provided the [accepted] option to the icml2018
% package.

% List of affiliations: The first argument should be a (short)
% identifier you will use later to specify author affiliations
% Academic affiliations should list Department, University, City, Region, Country
% Industry affiliations should list Company, City, Region, Country

% You can specify symbols, otherwise they are numbered in order.
% Ideally, you should not use this facility. Affiliations will be numbered
% in order of appearance and this is the preferred way.
\icmlsetsymbol{equal}{*}

\begin{icmlauthorlist}
\icmlauthor{Yanan Sui}{to}
\icmlauthor{Vincent Zhuang}{to}
\icmlauthor{Joel W. Burdick}{to}
\icmlauthor{Yisong Yue}{to}
\end{icmlauthorlist}

\icmlaffiliation{to}{California Institute of Technology, Pasadena, CA, USA}

\icmlcorrespondingauthor{Yanan Sui}{ysui@caltech.edu}
\icmlcorrespondingauthor{Vincent Zhuang}{vzhuang@caltech.edu}
\icmlcorrespondingauthor{Joel W. Burdick}{jwb@robotics.caltech.edu}
\icmlcorrespondingauthor{Yisong Yue}{yyue@caltech.edu}

% You may provide any keywords that you
% find helpful for describing your paper; these are used to populate
% the "keywords" metadata in the PDF but will not be shown in the document
\icmlkeywords{Safe Optimization, Bayesian Optimization, Gaussian Processes}

\vskip 0.3in
]

% this must go after the closing bracket ] following \twocolumn[ ...

% This command actually creates the footnote in the first column
% listing the affiliations and the copyright notice.
% The command takes one argument, which is text to display at the start of the footnote.
% The \icmlEqualContribution command is standard text for equal contribution.
% Remove it (just {}) if you do not need this facility.

\printAffiliationsAndNotice{}  % leave blank if no need to mention equal contribution
% \printAffiliationsAndNotice{\icmlEqualContribution} % otherwise use the standard text.

%\ys{Is "stagewise" widely considered as a correct word?}

% !TEX root =  ../main.tex
\begin{abstract} 
Enforcing safety is a key aspect of many problems pertaining to sequential decision making under uncertainty, which require the decisions made at every step to be both informative of the optimal decision and also safe. For example, we value both efficacy and comfort in medical therapy, and efficiency and safety in robotic control. We consider this problem of optimizing an unknown utility function with absolute feedback or preference feedback subject to unknown safety constraints. We develop an efficient safe Bayesian optimization algorithm, \algo, that separates safe region expansion and utility function maximization into two distinct stages. Compared to existing approaches which interleave between expansion and optimization, we show that \algo is more efficient and naturally applicable to a broader class of problems. We provide theoretical guarantees for both the satisfaction of safety constraints as well as convergence to the optimal utility value. We evaluate \algo on both a variety of synthetic experiments, as well as in clinical practice.  We demonstrate that \algo is more effective than existing safe optimization approaches, and is able to safely and effectively optimize spinal cord stimulation therapy in our clinical experiments.
\end{abstract}
 % !TEX root =  ../main.tex
\section{Introduction}
\label{sec:introduction}

Bayesian optimization is a well-established approach for sequentially optimizing unknown utility functions. By leveraging regularity assumptions such as smoothness and continuity, such techniques offer efficient solutions for a wide range of high-dimensional problem settings such as experimental design and personalization in recommender systems. 

% Bayesian optimization has been well-established for optimizing a priori unknown functions through exploring the input space. It is an efficient procedure for many applications in experimental design and recommender system. In Bayesian optimization, regularity assumptions on function continuity/smoothness are often made to prevent treating the black-box function as point-wise independent. For well behaved objective functions, these regularity assumptions significantly enhanced the efficiency of optimization.

Many of these applications are also subject to a variety of safety constraints, so that actions cannot be freely chosen from the entire input space. For instance, in safe Bayesian optimization, any chosen action during optimization must be known to be ``safe'', regardless of the reward from the utility function. Typically, one is initially given a small region of the decision/action space that is known to be safe, and must iteratively expand the safe action region during optimization \citep{sui15icml}.

% We want to optimize the unknown utility function and prevent sampling at potentially unsafe regions of the input space. 
% Furthermore, it is often the case that we have some known safe samples before the optimization starts.

A motivating application of our work is a clinical setting, where physicians need to sequentially choose among a large set of therapies \citep{sui2017correlational}. The effectiveness and safety of different therapies are initially unknown, and can only be determined through sequential tests starting from some initial set of well-studied therapies. A natural way to explore is to start from some therapies similar to these initial ones, since their efficacy and safety would not differ too greatly. By iteratively repeating this process, one can gradually explore the utility and safety landscapes in a safe fashion.

\textbf{Our contributions.}
We propose a novel safe Bayesian optimization algorithm, \algo, to address the challenge of efficiently identifying the total safe region and optimizing the utility function within the safe region. In contrast to previous safe Bayesian optimization work \cite{sui15icml,berkenkamp16bayesian} which interleaves safe region expansion and optimization, \algo is a stagewise algorithm which first expands the safe region and then optimizes the utility function. \algo is well suited for settings in which the safety and utility functions are very different (e.g., temperature vs gripping force), i.e. lie on different scales or amplitudes. Furthermore, in settings in which the utility and safety functions are measured in different ways, it is natural to have a separate first stage dedicated to safe region expansion. For example, in clinical trials we may wish to spend the first stage only querying the patient about the comfort of the stimulus, as opposed to having to measure the utility and comfort simultaneously. %In such a setting, it is not clear whether interleaving methods would work. 

%Compared to previous work, \algo works better in general with known time horizon $T$. It also makes more sense when the safety constraints and utility function are not directly comparable (e.g. temperature v.s. gripping force). 

Conceptually, \algo models the safety function(s) and utility function as sampled functions from different Gaussian processes (GPs), and uses confidence bounds to assess the safety of unexplored decisions. We provide theoretical results for \algo under the assumptions that (1) the safety and utility functions have bounded norms in their Reproducing Kernel Hilbert Spaces (RKHS) associated with the GPs, and (2) the safety functions are Lipschitz-continuous, which is guaranteed by many common kernels. We guarantee  (with high probability)  the convergence of \algo to the safely reachable optimum decision. In addition to simulation experiments, we apply \algo to a clinical setting of optimizing spinal cord stimulation for patients with spinal cord injuries. Compared to expert physicians, we find that \algo explores a larger safe region and finds better stimulation strategy.

\section{Related Work}
\label{sec:related}
%\paragraph{Related work.}
Many Bayesian optimization methods often model the unknown underlying functions as Gaussian processes (GPs), which are smooth, flexible, nonparametric models \cite{rasmussen06}. GPs are widely used as a regularity assumption in many Bayesian optimization techniques, since they can easily encode prior knowledge and explicitly model variance.

The fundamental tradeoff between exploration and exploitation in sequential decision problems is commonly formalized as the multi-armed bandit problem (MAB), introduced by \citet{robbins52}. In MAB, each decision is associated with a stochastic reward with initially unknown distribution. The goal of a bandit algorithm is to maximize the cumulative reward. In a variant called ``best-arm identification'' \citep{audibert10}, one seeks to identify the decision with highest reward with minimal trials.  It has been widely studied under a variety of different situations (cf., \citet{bubeck2012regret} for an overview). Many efficient algorithms build on the methods of {\em upper confidence bounds} proposed in \citet{auer02}, and {\em Thompson sampling} proposed in \citet{thompson1933likelihood}. Their key ideas are to use posterior distributions of rewards to implicitly negotiate the explore-exploit tradeoff by optimistic sampling. This idea naturally extends to bandit problems with complex (or even infinite) decision sets under certain regularity conditions of the reward function \citep{dani08,Kleinberg08,Bubeck08}.

In the kernelized setting, several algorithms with theoretical guarantees have been proposed. \citet{srinivas10} propose the GP-UCB algorithm, which uses confidence bounds to address bandit problems with a reward function modeled using a Gaussian process. \citet{gotovos13} studies active sampling for localizing level sets, finding where the objective crosses a specified threshold.
\citet{chowdhury2017kernelized} extends the work of \citet{srinivas10} by proving tighter bounds as well as providing guarantees for a GP-based Thompson sampling algorithm. However, none of these algorithms are designed to work with safety constraints, and often violate them in practice \cite{sui15icml}. 
There are also algorithms without theoretical guarantees. \citet{gelbart2014bayesian} studies a constrained Expected Improvement algorithm for Bayesian optimization with unknown constraints. \citet{hernandez2016general} considers a general framework for constrained Bayesian optimization using information-based search. 

%  \citet{srinivas10} show how confidence bounds can be used to address bandit problems with a reward function that is modeled using a Gaussian process, a regularity assumption also commonly made in Bayesian optimization (c.f., \citep{brochu10}), which is closely related to best-arm identification. \citet{chowdhury2017kernelized} extended the results in \citet{srinivas10} and also provided guarantees for Thompson sampling with kernelized input space. 
%These approaches effectively optimize long-term performance by accepting low immediate rewards for sake of exploration. While this compromise is acceptable in certain settings, it makes these techniques unsuitable in safety-critical applications. 
% Another problem that has been studied, is that of active sampling for localizing level sets, that is, decisions where the objective crosses a specified threshold \citep{bryan05,gotovos13}. However, these approaches do not consider safety, and can lead to selecting unsafe actions in safety-critical settings. 

The problem of safe exploration has been considered in control and reinforcement learning \citep{hans08safe,gillula11guaranteed,garcia12safe,turchetta16safemdp}. These methods typically consider the problem of safe exploration in MDPs. They ensure safety by restricting policies to be ergodic with high probability and able to recover from any state visited. 
The safe optimization problem has also been studied under the restriction of the bandit/optimization setting, where decisions do not cause state transitions. This leads to simpler algorithms (\safeopt) with stronger guarantees \cite{sui15icml,berkenkamp16bayesian}, and fits well to safe sampling problems and applications. There are other safe algorithms \cite{schreiter2015safe,wu2016conservative} under different active learning settings. Our work builds upon the \safeopt approach, with stronger empirical performance and convergence rates on a broad class of safety functions.

 %algorithm selects new, safe parameters at which the performance function and the safety constraints are evaluated on the real system. Based on the noisy information gained, the algorithm chooses a new, informative and safe evaluation point at each iteration n. This is repeated until the safely reachable optimum is found.

% !TEX root =  ../main.tex
\section{Problem Statement}
\label{sec:problem}

We consider a sequential decision problem in which we seek to optimize an unknown utility function $f:D\rightarrow\mathbb{R}$ from noisy evaluations at iteratively chosen sample points $\*x_1, \*x_2, \ldots, \*x_t, \ldots \in D$. However, we further require that each of these sample points are ``safe": that is, for each of $n$ unknown safety functions $g_i: D\rightarrow \mathbb{R}$ at $g_i(\*x_t)$ lies above some threshold $h_i\in\mathbb{R}$. We can formally write our optimization problem as follows:
\begin{equation}
\max_{\*x\in D} f(\*x) \quad\text{subject to}\enspace g_i(\*x) \geq h_i \text{ for } i = 1,\ldots, n
\end{equation}

% We consider a sequential decision problem, where we seek to optimize from noisy samples an unknown reward function
% $f:D\rightarrow \mathbb{R}$ defined on a finite set of decisions $D$. Concretely, we pick a sequence of decisions (e.g., items to recommend, experimental stimuli) $\*x_1,\*x_2,\dots\in D$, and, after each selection $\*x_t$, get back a noise-perturbed value of $f$, that is, we observe $y_t=f(\*x_t)+n_t$ (e.g., user rating, stimulus response). Our goal is to identify a decision $\*x^*$ of maximum reward $f$, akin to the problem of best-arm identification in multi-armed bandits \cite{audibert10}. As crucial difference, however, we wish to ensure that, for all rounds $t$, it holds that $f(\*x_t)\geq h$ with high probability, where $h$ is a problem-specific parameter. We call decisions for which $f(\*x_t)\geq h$ {\em safe}. In our recommender systems example, this means that we seek to identify items with utility to the user, while guaranteeing that we never propose items the user strongly dislikes. In our rehabilitation setting, we seek to guarantee that no painful stimuli are applied. Importantly, since $f$ is unknown, the set of safe decisions is initially unknown as well.
\paragraph{Regularity assumptions.} In order to model the utility function and the safety functions, we use Gaussian processes (GPs), which are smooth yet flexible nonparametric models. Equivalently, we assume that $f$ and all $g_i$ have bounded norm in the associated Reproducing Kernel Hilbert Space (RKHS). A GP is fully specified by its mean function $\mu(\*x)$ and covariance function $k(\*x, \*x')$; in this work, we assume WLOG GP priors to have zero mean (i.e. $\mu(\*x) = 0)$. 
We further assume that each safety function $g_i$ is $L_i$-Lipschitz continuous with respect to some metric $d$ on $D$. This assumption is quite mild, and is automatically satisfied by many commonly-used kernels \cite{srinivas10,sui15icml}.

% This is automatically satisfied, for example, when considering commonly used isotropic kernels, such as the Gaussian kernel, on $D$.  

\paragraph{Feedback models.}
We primarily consider noise-perturbed feedback, in which our observations are perturbed by i.i.d.~Gaussian noise, i.e., for samples 
at points $A_T = [\*x_1 \dots \*x_T]^T\subseteq D$, we have $\*y_t = f(\*x_t) + n_t$ where $n_t \sim N(0, \sigma^2)$. The posterior over $f$ is then also Gaussian with mean $\mu_T(\*x)$, covariance $k_T(\*x, \*x')$ and variance $\sigma_T^2(\*x, \*x')$ that satisfy,
\begin{align*}
\mu_T(\*x) &= \*k_T(\*x)^T(\*K_T + \sigma^2\*I)^{-1}\*y_T\\
k_T(\*x, \*x') &= k(\*x, \*x') - \*k_T(\*x)^T(\*K_T + \sigma^2\*I)^{-1}\*k_T(\*x')\\
\sigma_T^2(\*x) &= k_T(\*x, \*x),
\end{align*}
where $\*k_T(\*x) = [k(\*x_1, \*x) \dots k(\*x_T, \*x)]^T$ and $\*K_T$ is the positive definite kernel matrix $[k(\*x, \*x')]_{\*x, \*x' \in A_T}$.

We also consider the case in which only preference feedback is available for the utility function. This setting is often used to characterize real-world applications that elicit subjective human feedback.  One way to formalize the online optimization problem is the dueling bandits problem \citep{yue2012k,sui2017multi}. In the basic dueling bandits formulation, given two points $\*x_1$ and $\*x_2$, we stochastically receive binary 0/1 feedback according to a Bernoulli distribution with parameter $\phi(f(\*x_1), f(\*x_2))$, where $\phi$ is a \emph{link function} mapping $\mathbb{R}\times\mathbb{R}$ to $[0, 1]$. For example, a common link function is the logit function $\phi(x,y) = (1+\exp(y-x))^{-1}$. 

To our knowledge, there are no existing algorithms for the safe Bayesian dueling bandit setting. Although our proposed algorithm is amenable to the full dueling bandits setting (as discussed later), to compare against existing algorithms, we consider the restricted dueling problem in which at timestep $t$ one receives preference feedback between $\*x_t$ and $\*x_{t-1}$. The pseudocode for our proposed algorithm under this type of dueling feedback can be found in Appendix \ref{app:duel}.

\paragraph{Safe optimization.}
Using a uniform zero-mean prior (as is typical in many Bayesian optimization approaches) does not provide sufficient information to identify any point as safe with high probability. Therefore, we additionally assume that we are given an initial ``seed'' set of safe decision(s), which we denote as $S_0\subset D$. Note that given an arbitrary seed set, it is not guaranteed that we will be able to discover the globally optimal decision $\*x^* = \argmax_{\*x \in D}{f(\*x)}$, e.g. if the safe region around $\*x^*$ is topologically separate from that of $S_0$. Instead, we can formally define the optimization goal for a given seed via the {\em one-step reachability} operator:
\[\begin{split}
\Reps(S) \!\defeq\! S \cup \bigcap_i\bigsetdef{\*x \in D}{\exists &\*x' \in S, \\
&g_i(\*x') - \epsilon - L_id(\*x', \*x) \geq h_i},
\end{split}\]
which gives the set all of points that can be established as safe given evaluations of $g_i$'s on $S$ with $\epsilon$ noise. Then, given some finite horizon $T$, we can define the subset of $D$ reachable after $T$ iterations from the initial safe seed set $S_0$ as the following:
$$\Reps^T(S_0) \defeq \underbrace{\Reps(\Reps\ldots(\Reps}_{T\ \mathrm{times}}(S_0))\ldots).$$ Thus, our optimization goal is $\argmax_{\*x\in\Reps^T(S_0)} f(\*x)$.

\section{Algorithm}
\label{sec:algorithm}

We now introduce our proposed algorithm, \algo, for the safe exploration for optimization problem.

\paragraph{Overview.} We start with a high-level description of \algo. \algo separates the safe optimization problem into two stages: an exploration phase in which the safe region is iteratively expanded, followed by an optimization phase in which Bayesian optimization is performed within the safe region. We assume that our algorithm runs for a fixed $T$ time steps, and that the first safe expansion region has horizon $T_0 < T$ with the optimization phase being $T_1 = T - T_0$ time steps long.

\algo models the utility function and the safety functions via Gaussian processes, and leverages their uncertainty in order to safely explore and optimize. In particular, at each iteration $t$, \algo uses the confidence intervals
\begin{align} \label{eq:qt}
  Q_t^i(\*x) \defeq \left[\mu_{t-1}^i(\*x) \pm \beta_t\sigma_{t-1}^i(\*x)\right],
\end{align}
where $\beta_t$ is a scalar whose  choice will be discussed later. We  use superscripts to denote the confidence intervals for the respective safety functions, and we use the superscript $f$ for the utility function. In order to guarantee both safety and progress in safe region expansion,  instead of using $Q_t^i$ directly, \algo uses the confidence intervals $C_t^i$ defined as $C_t^i(\*x) \defeq C_{t-1}^i(\*x) \cap Q_t^i(\*x)$, $C_0^i(\*x) = [h_i,\infty]$ so that $C_t^i$ are sequentially contained in $C_{t-1}^i$ for all $t$. We also define the upper and lower bounds of $C_t^i$ to be $u_t^i$ and $\ell_t^i$ respectively, as well as the width as $w_t^i = u_t^i-\ell_t^i$.

We defined the optimization goal with respect to a tolerance parameter $\epsilon$, which can be employed as a stopping condition for the expansion stage. Namely, if the expansion stage stops at $T_0$ under the condition 
$\forall i, \epsilon_t^i = \max_{\*x\in G_t} w_t(\*x) < \epsilon,$
then the $\epsilon$-Reachable safe region $\Reps^{T_0}(S_0)$ is guaranteed to be expanded. Similarly, we have a tolerance parameter $\zeta$ (in Algorithm~\ref{alg:safe}) to control utility function optimization with time horizon $T_1$.

\paragraph{Stage One: Safe region expansion.} \algo expands the safe region in the same way as that of \safeopt \citep{sui15icml,berkenkamp16bayesian}. An increasing sequence of safe subsets $S_t\subseteq D$ is computed based on the confidence intervals of the GP posterior:
$$S_t = \bigcap_{i}{\bigcup_{\*x \in S_{t-1}}\bigsetdef{\*x' \in D}{\ell_t^i(\*x) - L_i d(\*x, \*x') \geq h_i}}.$$
At each iteration, \algo computes a set of \emph{expander} points $G_t$ (that is, points within the current safe region that are likely to expand the safe region) and picks the expander with the highest predictive uncertainty.

In order to define the set $G_t$, we first define the function:
$$e_t(\*x) \defeq \Big|\bigcap_i \bigsetdef{\*x' \in D \setminus S_t}{u_t(\*x) - L_i d(\*x, \*x') \geq h_i}\Big|,$$
which (optimistically) quantifies the potential enlargement of the current safe set after we sample a new decision $\*x$. Then, $G_t$ is simply given by:
$$G_t=\{\*x\in S_t:e_t(\*x)>0\}.$$

Finally, at each iteration \algo selects $x_t$ to be $x_t = \argmax_{\*x\in G_t} w_n(\*x, i)$.

\paragraph{Stage Two: Utility optimization.} Once the safe region is established, \algo can use any standard online optimization approach to optimize the utility function within the expanded safe region.  For concreteness, we present here the GP-UCB algorithm \citep{srinivas10}.  For completeness, we present a version of \algo based on preference-based utility optimization in Appendix \ref{app:duel}.
%In practice, this stage of \algo is extremely flexible - one can replace GP-UCB with any standard Bayesian optimization algorithm. 
Our theoretical analysis is also predicated on using GP-UCB, since it offers finite-time regret bounds. Formally, at each iteration in this phase, we select the arm $x_t$ as the following:
\begin{align} \label{eq:ucb}
x_t = \argmax_{\*x\in S_t} \mu_{t-1}^f(\*x) + \beta_t\sigma_{t-1}^f(\*x)
\end{align}
Note that it possible (though typically unlikely) for the safe region to further expand during this phase. 

\paragraph{Comparison between \safeopt and \algo.}
\begin{figure*}[t!]
\centering
\begin{subfigure}[t]{\dimexpr0.31\textwidth+20pt\relax}
    \makebox[20pt]{\raisebox{40pt}{\rotatebox[origin=c]{90}{\safeopt utility}}}%
    \includegraphics[width=\dimexpr\linewidth-20pt\relax]
    {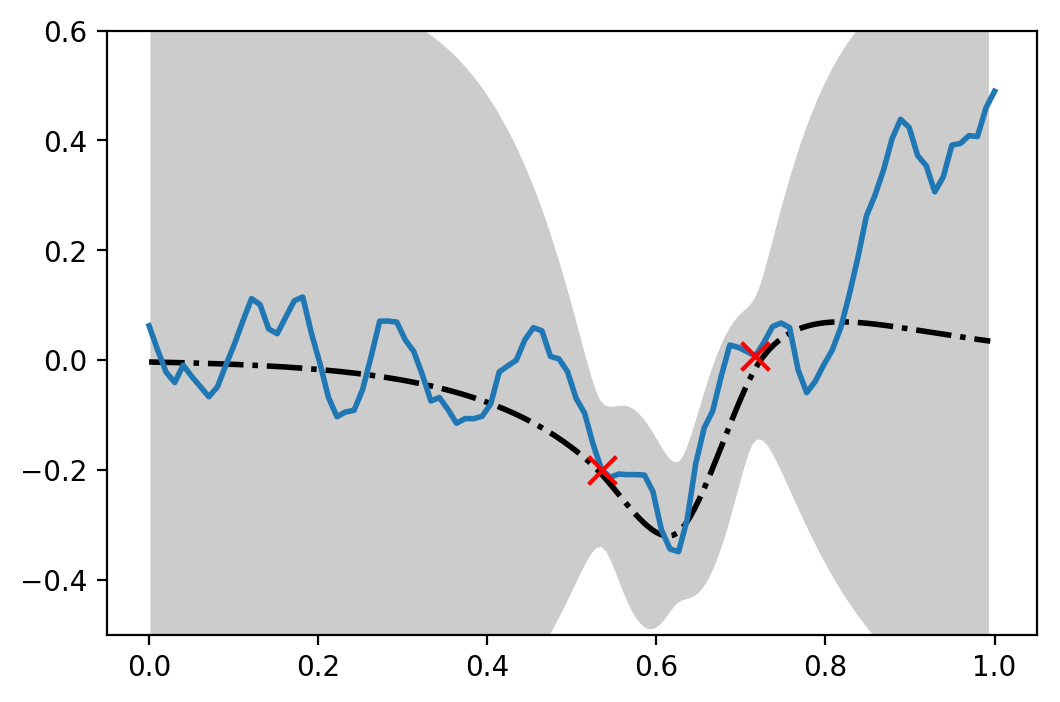}
    \makebox[20pt]{\raisebox{40pt}{\rotatebox[origin=c]{90}{\safeopt safety}}}%
    \includegraphics[width=\dimexpr\linewidth-20pt\relax]
    {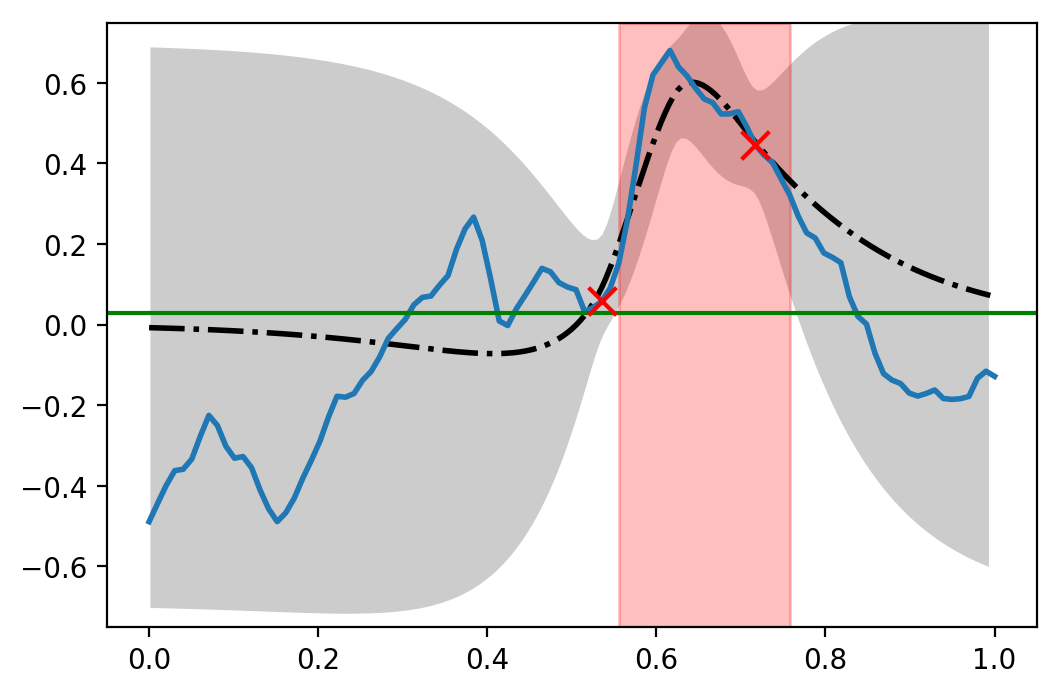}
    \makebox[20pt]{\raisebox{40pt}{\rotatebox[origin=c]{90}{\algo utility}}}%
    \includegraphics[width=\dimexpr\linewidth-20pt\relax]
    {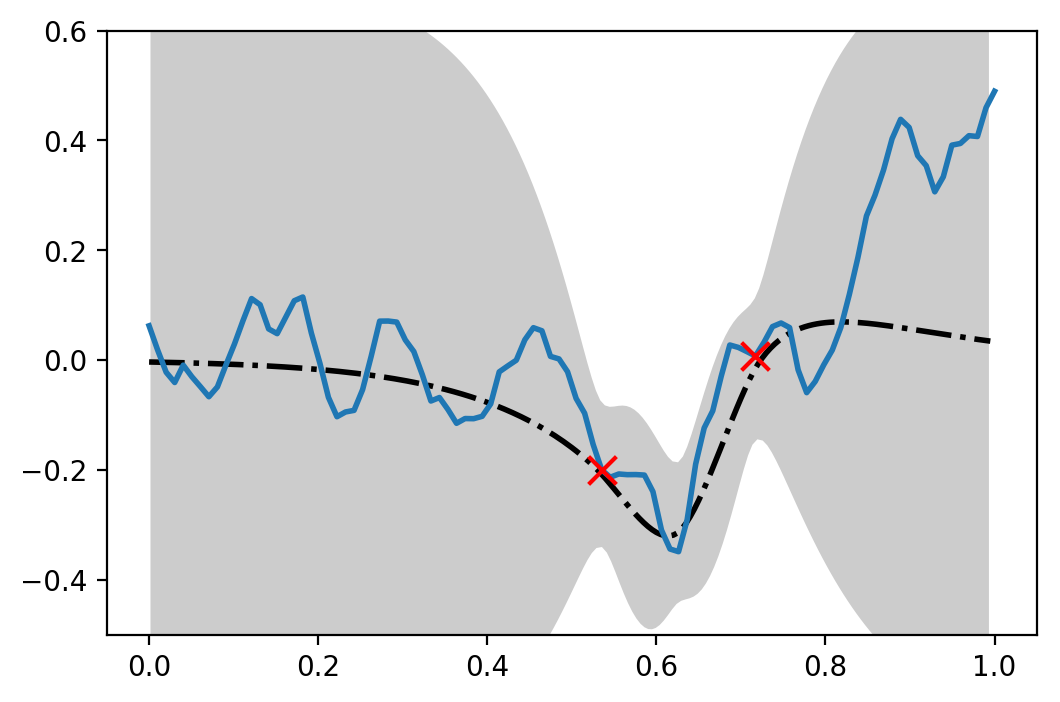}
    \makebox[20pt]{\raisebox{40pt}{\rotatebox[origin=c]{90}{\algo safety}}}%
    \includegraphics[width=\dimexpr\linewidth-20pt\relax]
    {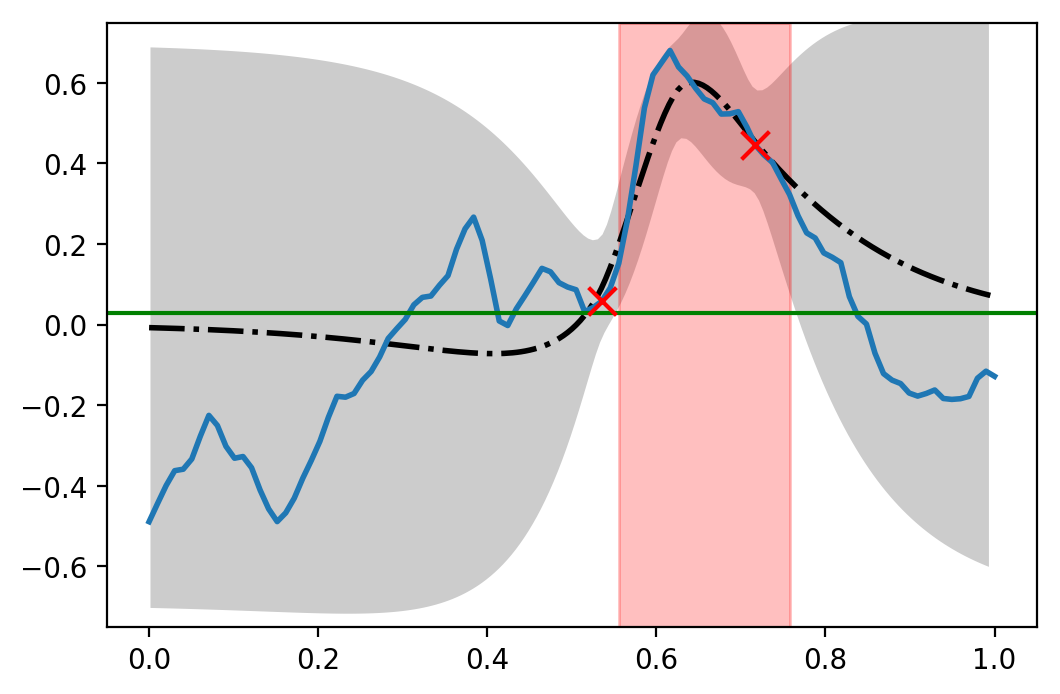}
    \caption{2 iterations}
\end{subfigure}\hfill
\begin{subfigure}[t]{0.31\textwidth}
    \includegraphics[width=\textwidth]  
    {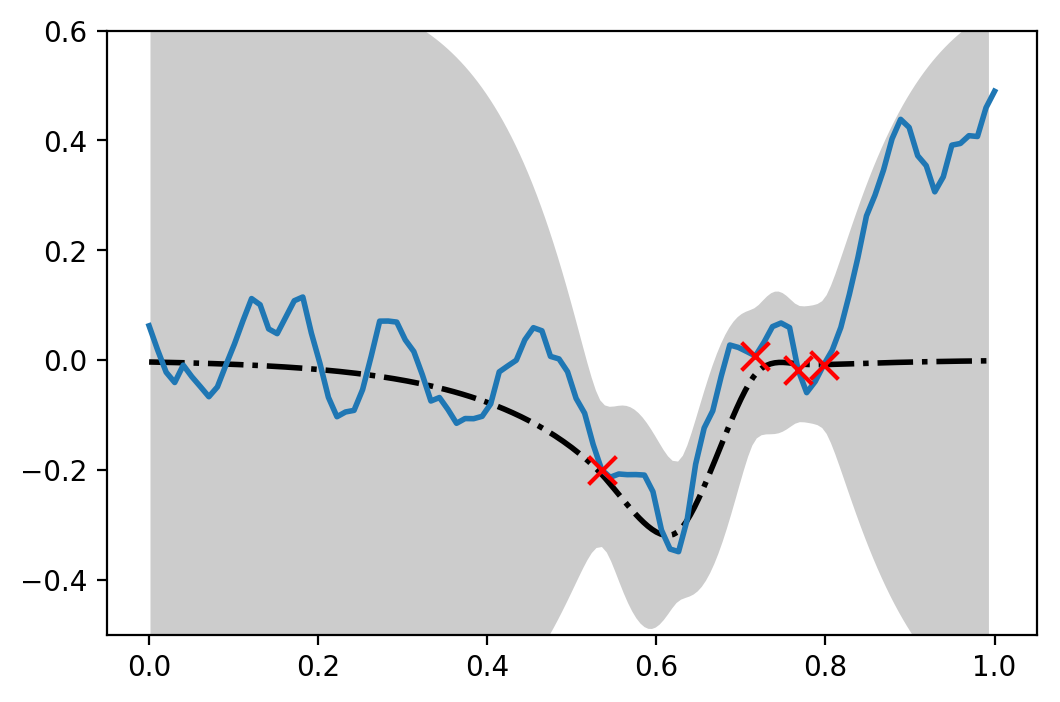}
    \includegraphics[width=\textwidth]
    {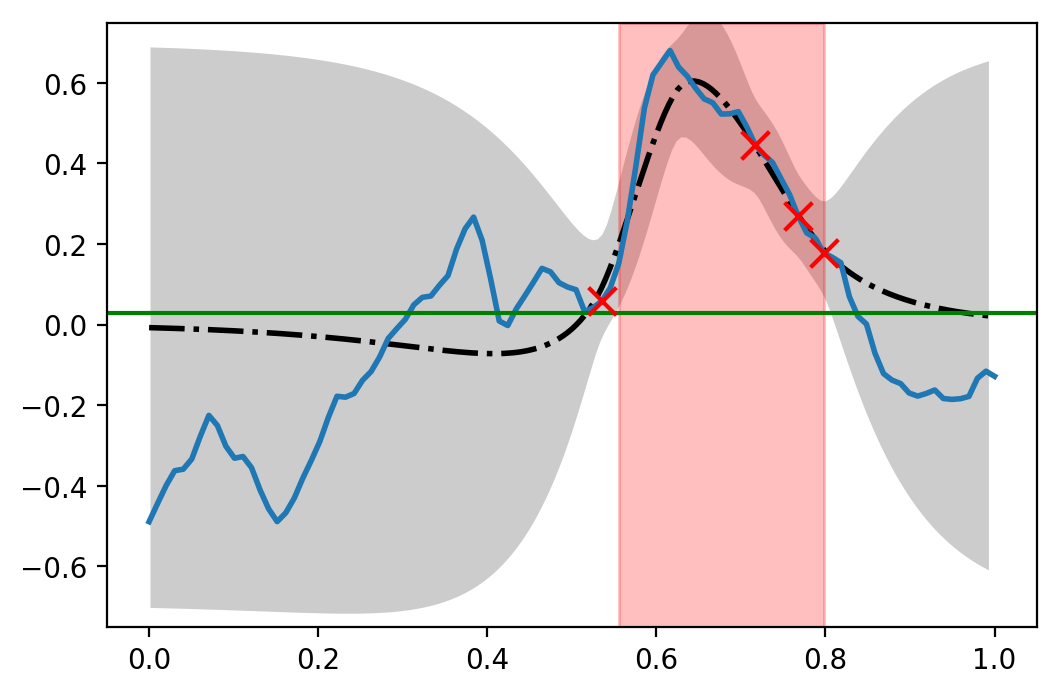}
    \includegraphics[width=\textwidth]
    {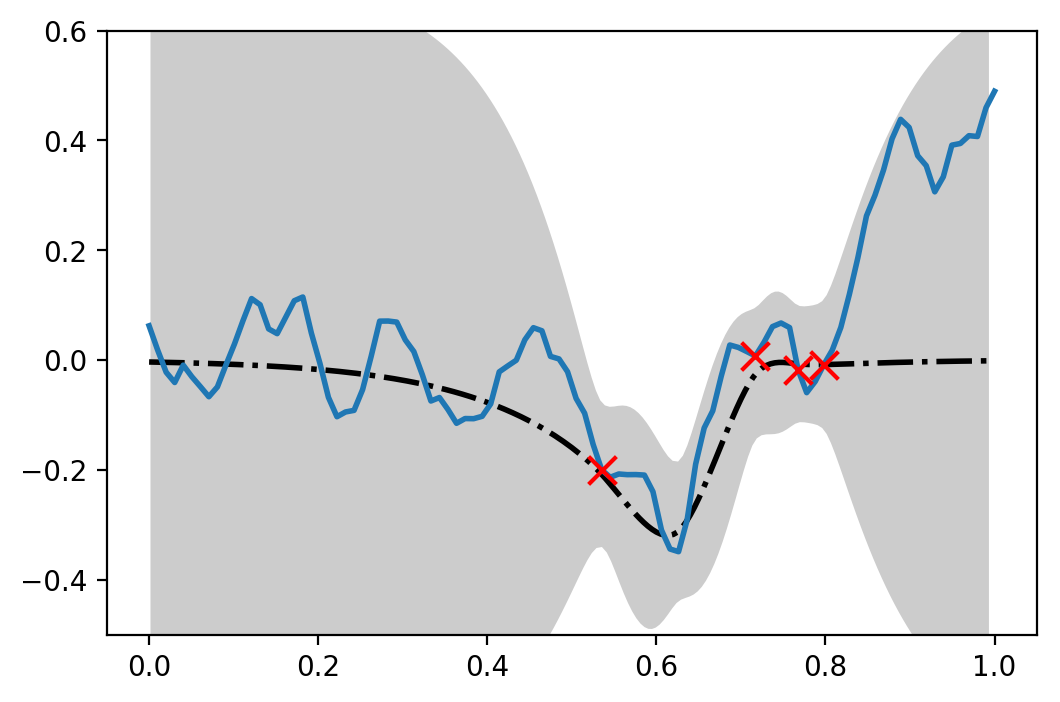}
    \includegraphics[width=\textwidth]
    {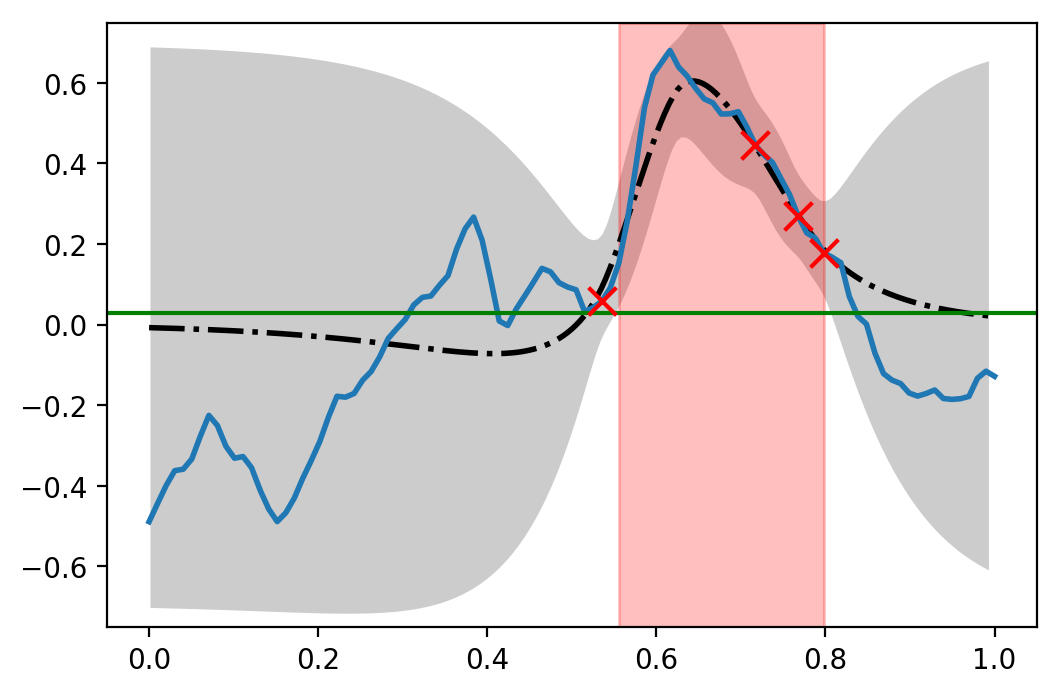}
    \caption{4 iterations}
\end{subfigure}\hfill
\begin{subfigure}[t]{0.31\textwidth}
    \includegraphics[width=\textwidth]  
    {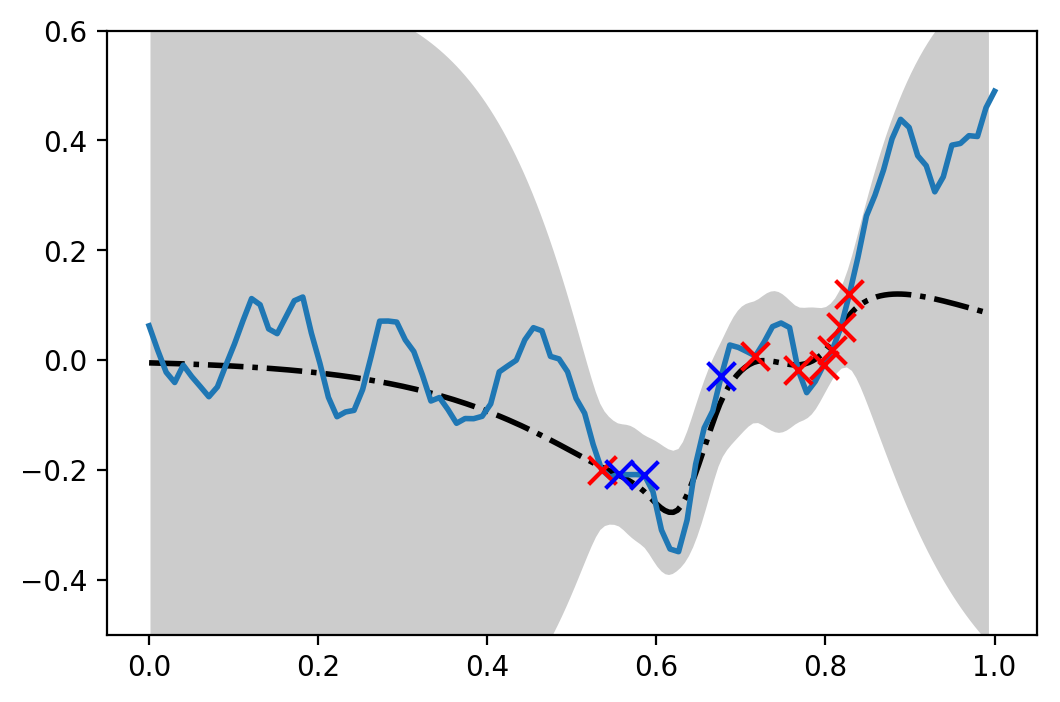}
    \includegraphics[width=\textwidth]
    {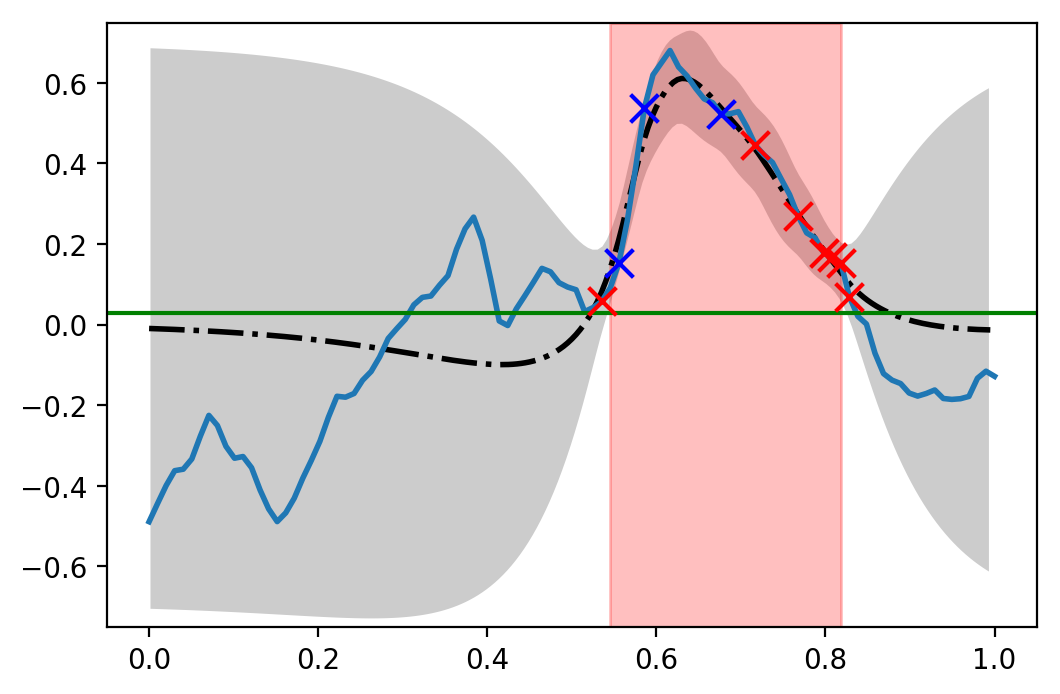}
    \includegraphics[width=\textwidth]
    {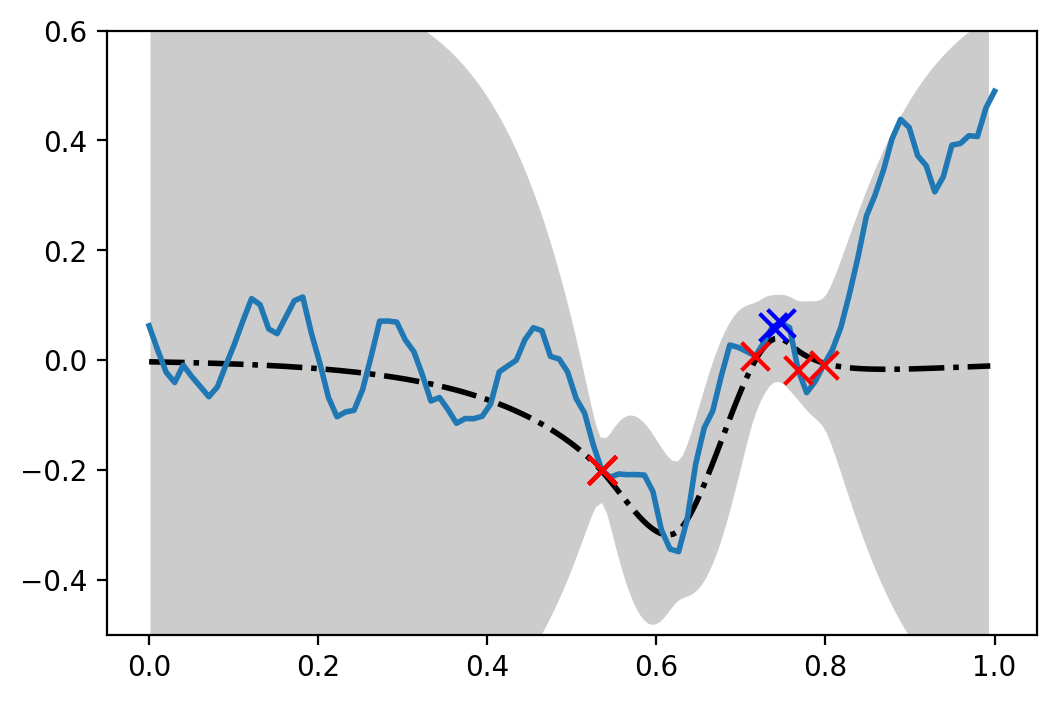}
    \includegraphics[width=\textwidth]
    {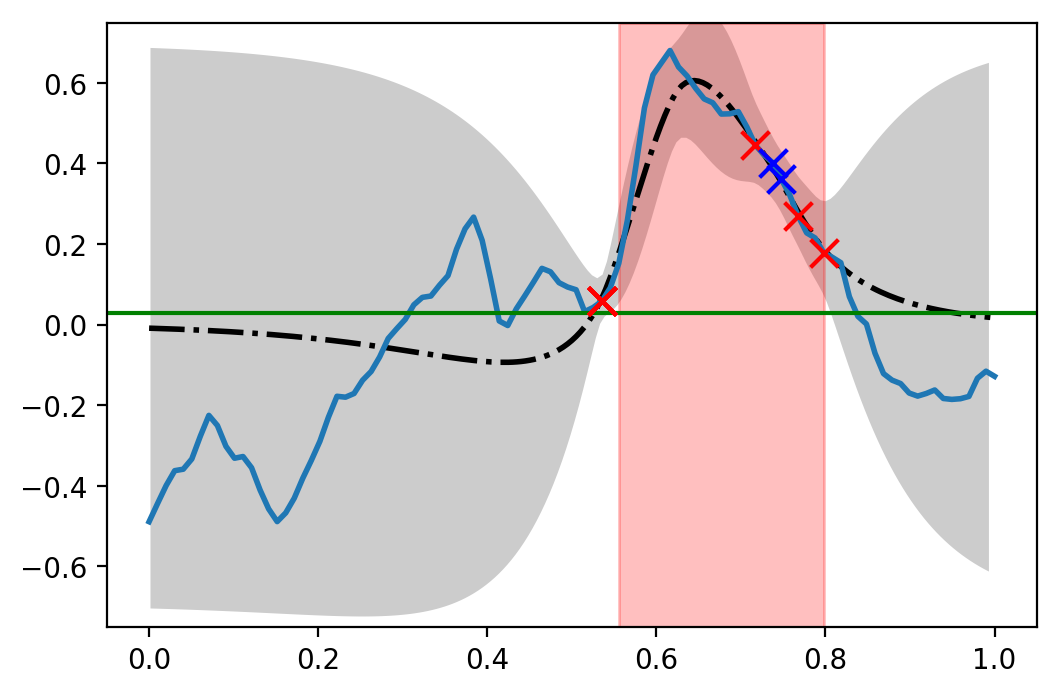}
    \caption{10 iterations}
\end{subfigure}
\caption{Evolution of GPs in \safeopt and \algo for a fixed safe seed; dashed lines correspond to the mean and shaded areas to $\pm 2$ standard deviations. The first and third rows depict the utility function, and the second and fourth rows depict a single safety function.  The  utility and safety functions were randomly sampled from a zero-mean GP with a Matern kernel, and are represented with solid blue lines. The safety threshold is shown as the green line, and safe regions are shown in red.  The red markers correspond to safe expansions and blue markers to maximizations and optimizations.  We see that \algo identifies actions with higher utility than \safeopt.}
\label{fig:compare}
\end{figure*}

Although \algo is similar to \safeopt in that it constructs confidence intervals and defines the safe region in the same way, there are distinct differences in how these algorithms work. We illustrate the behavior of \safeopt and \algo starting from a common safe seed in Figure \ref{fig:compare}. Initially, both algorithms select the same points since they use the same definition of safe expansion. However, \algo selects noticeably better optimization points than \safeopt due its UCB criterion. We leave a more detailed discussion of this behavior for Section \ref{sec:experiment}.

We also re-emphasize that since \algo separates the safe optimization problem into safe expansion and utility optimization phases, it is much more amenable to a variety of related settings than \safeopt. For example, as discussed in detail in the appendix, dueling feedback can easily be incorporated into \algo: in the dueling setting, one can simply replace GP-UCB in the  utility optimization stage with any kernelized dueling-bandit algorithm, such as \kersparring \citep{sui2017multi}.

% The sets $S_t$, $G_t$, and $M_t$ exhibit some interesting dynamics. As mentioned above, the reachable region $S_t$ is monotonically increasing, i.e., $S_0\subseteq S_1 \subseteq S_2 \dots$. The algorithm proceeds in stages, within each of which, the set $S_t$ does not change (i.e., $S_t=S_{t+1}$). This is because not enough evidence has been accrued yet to establish new decisions as safe. Within each such stage, the sets $G_t$ and $M_t$ keep shrinking, due to the monotonicity of the confidence bounds used. However, as soon as new decisions are identified as safe, $G_t$ and $M_t$ may increase again.

\begin{algorithm}[tb]
  \caption{\algo}
  \label{alg:safe}
{\small
\begin{algorithmic}[1]
\STATE {\bfseries Input:} sample set $D$, $i\in\{1,\dots,n\}$,\\  
           \hspace{3.0em}GP prior for utility function $f$,\\  
           \hspace{3.0em}GP priors for safety functions $g_i$,\\
           \hspace{3.0em}Lipschitz constants $L_i$ for $g_i$,\\
           \hspace{3.0em}safe seed set $S_0$,\\
           \hspace{3.0em}safety threshold $h_i$,\\
           \hspace{3.0em}accuracies $\epsilon$ (for expansion), $\zeta$ (for optimization).
  \LET{$C_0^i(\*x)$}{$[h_i, \infty)$, for all $\*x \in S_0$}
  \LET{$C_0^i(\*x)$}{$\mathbb{R}$, for all $\*x \in D \setminus S_0$}
  \LET{$Q_0^i(\*x)$}{$\mathbb{R}$, for all $\*x \in D$}
  \LET{$C_0^f(\*x)$}{$\mathbb{R}$, for all $\*x \in D$}
  \LET{$Q_0^f(\*x)$}{$\mathbb{R}$, for all $\*x \in D$}

  \FOR{$t = 1, \ldots\, T_0$}
    \LET{$C_t^i(\*x)$}{$C_{t-1}^i(\*x) \cap Q_{t-1}^i(\*x)$}
    \LET{$C_t^f(\*x)$}{$C_{t-1}^f(\*x) \cap Q_{t-1}^f(\*x)$}
    \LET{$S_t$}{$\bigcap_{i}{\bigcup_{\*x \in S_{t-1}}\bigsetdef{\*x' \in D}{\ell_t^i(\*x) - L_i d(\*x, \*x') \geq h_i}}$}
    \LET{$G_t$}{$\bigsetdef{\*x \in S_t}{e_t(\*x) > 0}$}
    \IF{$\forall i, \epsilon_t^i < \epsilon$}
    \LET{$\*x_t$}{$\argmax_{\*x\in S_t} \mu_{t-1}^f(\*x) + \beta_t\sigma_{t-1}^f(\*x)$}
    \ELSE
    \LET{$\*x_t$}{$\argmax_{\*x \in G_t, i\in\{1,\dots,n\}}w_t^i(\*x)$} 
    \ENDIF
    \LET{$y_{f, t}$}{$f(\*x_t) + n_{f, t}$}
    \LET{$y_{i, t}$}{$g_i(\*x_t) + n_{i, t}$}
    \STATE Compute $Q_{f, t}(\*x)$ and $Q_{i, t}(\*x)$, for all $\*x \in S_t$
  \ENDFOR
  \FOR{$t=T_0+1, \ldots, T$}
    \LET{$C_t^f(\*x)$}{$C_{t-1}^f(\*x) \cap Q_{t-1}^f(\*x)$}
    \LET{$\*x_t$}{$\argmax_{\*x\in S_t} \mu_{t-1}^f(\*x) + \beta_t\sigma_{t-1}^f(\*x)$}
    \LET{$y_{f, t}$}{$f(\*x_t) + n_{f, t}$}
    \LET{$y_{i, t}$}{$g_i(\*x_t) + n_{i, t}$}
    \STATE Compute $Q_{f, t}(\*x)$ and $Q_{i, t}(\*x)$, for all $\*x \in S_t$ 
  \ENDFOR
\end{algorithmic}
}
\end{algorithm}

\section{Theoretical Results}
\label{sec:theoretical}

In this section, we show the effectiveness of \algo by theoretically bounding its sample complexity for expansion and optimization. The two stages of \algo are the expansion of the safe region in search for the total safe region, and the optimization within the safe region.

The correctness of \algo relies on the fact that the classification of sets $S_t$ and $G_t$ is sound. While this requires that the confidence bounds $C_t$ are conservative, using bounds that are too conservative will slow down the algorithm considerably. The tightness of the confidence bounds is controlled by parameter $\beta_t$ in Equation~\ref{eq:qt}. This problem of properly tuning confidence bounds using Gaussian processes in exploration--exploitation trade-off has been studied by \citet{srinivas10,chowdhury2017kernelized}. These algorithms are designed for the stochastic multi-armed bandit problem on a kernelized input space without safety constraints. However, their choice of confidence bounds can be generalized to our setting for expansion and optimization. In particular, for our theoretical results to hold it suffices to choose:
\begin{align} \label{eq:beta}
  \beta_t = B + \sigma \sqrt{2(\gamma_{t-1} + 1 +\log(1/\delta))},
\end{align}
where $B$ is a bound on the RKHS norm of $f$, $\delta$ is the allowed failure probability, observation noise is $\sigma$-sub-Gaussian, and $\gamma_t$ quantifies the effective degrees of freedom associated with the kernel function. Concretely, 
$$\gamma_t=\max_{|A|\leq t} I(f;\*y_A)$$ is the maximal mutual information that can be obtained about the GP prior from $t$ samples.

% \begin{lemma} \label{lem:confidence}
% Assume $\|f\|^2_k \leq B$, and suppose the observation noise $n_t$ is a $\sigma$-sub-Gaussian stochastic process. Choose $\beta_t$ as in Equation~\ref{eq:beta}. Then for all $\delta \in (0,1)$, with probability at least $1-\delta$, for all iterations $t$ during the expansion stage of \algo, and for all $\*x\in D$, it holds that $f(\*x)\in C_t(\*x)$.
% \end{lemma}
% \begin{proof}
% This lemma directly follows the Theorem~2 of \citet{chowdhury2017kernelized}. $C_t(\*x)$ represents the confidence interval which we construct in Section~\ref{sec:problem}.
% \end{proof}

We present two main theorems for \algo. Theorem~\ref{thm:safe} ensures  convergence to the reachable safe region in the safe expansion stage. Theorem~\ref{thm:opt} ensures  convergence towards optimal utility value within the safe region in the utility optimization stage. Both results are finite time bounds.

\begin{theorem}
\label{thm:safe}
Suppose safety functions $g_i$ satisfies $\|g_i\|^2_k \leq B$ and $g_i$ further is $L_i$-Lipschitz-continuous. $i\in \{1, \dots, n\}$. Also, suppose $S_0\neq \varnothing$, and $g_i(\*x)\geq h_i$, for all $\*x\in S_0$.  Fix any $\epsilon>0$ and $\delta \in (0, 1)$.
Suppose we run the safe region expansion stage of \algo with seed set $S_0$, with noise $n_t$ to be $\sigma$-sub-Gaussian, and $\beta_t = B + \sigma \sqrt{2(\gamma_{t-1} + 1 +\log(1/\delta))}$ with safety function hyperparameters. Let $t^*$ be the smallest positive integer satisfying
\begin{align*}
\frac{t^*}{\beta_{t^*}^2 \gamma_{nt^*}} \geq \frac{C_1 \left(|\Rbo(S_0)| + 1\right)}{\epsilon^2},
\end{align*}
where $C_1 = 8 / \log(1 + \sigma^{-2})$.
Then, the following jointly hold with probability at least $1-\delta$:
\begin{itemize}
  \item $\forall t \geq 1$ and $i \in \{1, \dots, n\}$, $g_i(\*x_t) \geq h_i$,
  \item $\forall t \geq t^*$, $\epsilon$-Reachable safe region $\Reps^{T_0}(S_0)$ is guaranteed to be expanded.
\end{itemize}
\end{theorem}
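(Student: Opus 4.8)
The plan is to adapt the \safeopt/\safeoptmc analysis \citep{sui15icml,berkenkamp16bayesian}, rerun with the RKHS-norm confidence bounds of \citet{chowdhury2017kernelized}, in four steps: validity of the confidence intervals, the safety guarantee, an information-theoretic bound on the expander uncertainties, and a combinatorial argument converting that bound into completion of the expansion. First I would invoke the self-normalized concentration inequality of \citet{chowdhury2017kernelized} for each safety GP: with the stated $\beta_t$ and a union bound over $i\in\{1,\dots,n\}$ (absorbed into $\delta$ and the constants), with probability at least $1-\delta$ the event $\mathcal{E}$ that $|g_i(\*x)-\mu_{t-1}^i(\*x)|\le\beta_t\sigma_{t-1}^i(\*x)$ holds for all $\*x\in D$, all $t\ge1$, and all $i$; I condition on $\mathcal{E}$ for the rest. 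Then $g_i(\*x)\in Q_t^i(\*x)$, and since $g_i(\*x)\in C_0^i(\*x)=[h_i,\infty)$ for $\*x\in S_0$ by the seed assumption, an induction on $t$ using $C_t^i=C_{t-1}^i\cap Q_{t-1}^i$ gives $\ell_t^i(\*x)\le g_i(\*x)\le u_t^i(\*x)$ everywhere relevant. For safety I would then induct on $t$ that $g_i(\*x)\ge h_i$ for every $\*x\in S_t$ and every $i$: the base case is the seed assumption, and for the step any $\*x'\in S_t$ has a witness $\*x\in S_{t-1}$ with $\ell_t^i(\*x)-L_i d(\*x,\*x')\ge h_i$, so $L_i$-Lipschitz-continuity together with $g_i(\*x)\ge\ell_t^i(\*x)$ gives $g_i(\*x')\ge g_i(\*x)-L_i d(\*x,\*x')\ge h_i$. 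Since every point \algo queries in the expansion stage lies in $G_t\subseteq S_t$ (as does the point it picks once the stopping condition fires), this is exactly the first bullet.

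For the expansion bound I would combine three ingredients. (a) Monotonicity: $C_t^i\subseteq C_{t-1}^i$ makes $\ell_t^i$ non-decreasing and $u_t^i,w_t^i$ non-increasing, hence $S_t$ non-decreasing; validity of the bounds forces $S_t\subseteq\Rbo(S_0)$, so (as $\Rbo(S_0)$ is finite) $S_t$ takes at most $|\Rbo(S_0)|+1$ distinct values, and on each maximal time interval where $S_t$ is constant, $G_t$ shrinks and every $w_t^i$ shrinks, so $\max_i\epsilon_t^i$ is non-increasing there. (b) An information-theoretic estimate: writing $(\*x_s,i_s)=\argmax_{\*x\in G_s,\,i}w_s^i(\*x)$, we have $\max_i\epsilon_s^i=w_s^{i_s}(\*x_s)\le 2\beta_s\sigma_{s-1}^{i_s}(\*x_s)\le 2\beta_t\sigma_{s-1}^{i_s}(\*x_s)$ for $s\le t$, and the standard bound $\sum_{s\le t}\sum_i\sigma_{s-1}^i(\*x_s)^2\le\tfrac{2}{\log(1+\sigma^{-2})}\gamma_{nt}$ (the $n$ safety evaluations per round yielding the $nt$) gives $\sum_{s\le t}(\max_i\epsilon_s^i)^2\le C_1\beta_t^2\gamma_{nt}$ with $C_1=8/\log(1+\sigma^{-2})$. (c) A counting/amortization step: the number of rounds $s\le t$ with $\max_i\epsilon_s^i\ge\epsilon$ is at most $C_1\beta_t^2\gamma_{nt}/\epsilon^2$, which at $t=t^*$ is $\le t^*/(|\Rbo(S_0)|+1)$ by the defining inequality for $t^*$; combining this budget with the fact that there are at most $|\Rbo(S_0)|+1$ epochs of constant $S_t$ and that $\max_i\epsilon_t^i$ is non-increasing within each epoch, a pigeonhole argument (as in \safeopt) shows that within $t^*$ rounds the expansion cannot remain ``stuck,'' i.e.\ each epoch is exited into the state $\max_i\epsilon_t^i<\epsilon$, so all the safe-set growth that is going to happen has happened.

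It then remains to identify this terminal safe set with a superset of $\Reps^{T_0}(S_0)$, for which I would use the deterministic lemma: if $\max_i\epsilon_t^i<\epsilon$ at round $t$, then $\Reps(S_t)\subseteq S_{t+1}$. Indeed, any $\*x'\in\Reps(S_t)\setminus S_t$ has a witness $\*x\in S_t$ with $g_i(\*x)-\epsilon-L_i d(\*x,\*x')\ge h_i$ for all $i$; then $u_t^i(\*x)\ge g_i(\*x)>h_i+L_i d(\*x,\*x')$ forces $\*x\in G_t$, so $w_t^i(\*x)<\epsilon$, whence $\ell_t^i(\*x)>g_i(\*x)-\epsilon\ge h_i+L_i d(\*x,\*x')$ and $\*x'\in S_{t+1}$. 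Chaining this lemma along the ``good'' rounds from step (c) and using monotonicity of both $S_t$ and the operator $\Reps$ together with $S_0\subseteq S_t$ gives $\Reps^{T_0}(S_0)\subseteq S_{T_0}$ whenever $T_0\ge t^*$, which is the second bullet; both bullets hold on $\mathcal{E}$, of probability at least $1-\delta$.

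\paragraph{Main obstacle.} Steps (a)--(b) and the safety argument are routine; the delicate part is step (c) together with the chaining of the reachability lemma. Because $\max_i\epsilon_t^i$ is not monotone in $t$ — it can jump up each time $S_t$ grows and a fresh batch of expanders appears — one cannot read off ``the expansion is finished'' from a single round with small uncertainty; the epoch decomposition, the per-epoch monotonicity, and the fact that $S_t\subseteq\Rbo(S_0)$ caps the epoch count at $|\Rbo(S_0)|+1$ are exactly what is needed to turn the averaged uncertainty bound into a statement about the \emph{terminal} safe set, and this is the source of the $|\Rbo(S_0)|+1$ factor in $t^*$. Matching the information-theoretic constant and the use of $\gamma_{nt}$ (rather than $n\gamma_t$) to account for the $n$ simultaneous safety observations per round is the other point requiring care.
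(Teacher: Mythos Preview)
Your confidence–interval validity, the Lipschitz induction for safety, the monotonicity properties, the information–gain estimate, and your reachability lemma are all correct and match the paper's Lemmas~\ref{lem:confidence}–\ref{lem:wt}, \ref{lem:stleq}, \ref{lem:safe}, and the core of Lemma~\ref{lem:expansion}. The first bullet is handled exactly as in the paper.

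Where you diverge from the paper is in assembling these pieces for the second bullet, and your step~(c) has a gap as written. The assertion ``each epoch is exited into the state $\max_i\epsilon_t^i<\epsilon$'' is false: an epoch can end because $S_t$ strictly grows (some lower bound crosses the threshold) \emph{before} every width on $G_t$ drops below $\epsilon$. Moreover, your chaining gives only $R_\epsilon^{\,m}(S_0)\subseteq S_{T_0}$ with $m$ the number of good rounds in $[1,T_0]$, not $R_\epsilon^{T_0}(S_0)\subseteq S_{T_0}$; to close this you would also need the (easy but unstated) fact that $R_\epsilon^{\,k}(S_0)$ stabilizes at $\bar R_\epsilon(S_0)$ after at most $|\bar R_0(S_0)|$ iterations, together with a count showing there are at least that many good rounds before $t^*$. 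Your global bad–round budget plus the epoch count do not by themselves supply either conclusion via pigeonhole.

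The paper avoids both issues by pigeonholing on epoch \emph{lengths} rather than on a bad–round budget: since $S_t\subseteq\bar R_0(S_0)$ and grows monotonically, some window $[t_0,t_0+T_{t_0}]\subseteq[0,t^*]$ has $S$ constant, with $T_{t_0}$ chosen so that the (restricted) information–gain bound together with within–epoch monotonicity forces $\max_{G_{t_0+T_{t_0}},\,i} w<\epsilon$ at its end. Because $S$ still did not grow over this window, the \emph{contrapositive} of your reachability lemma gives $\bar R_\epsilon(S_0)\subseteq S_{t_0}$ in one stroke (this is the paper's Lemma~\ref{lem:expansion} combined with Lemma~\ref{lem:existence}). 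This is precisely the mechanism your ``Main obstacle'' paragraph is reaching for; implementing (c) this way closes the argument cleanly.
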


The detailed proof of Theorem~\ref{thm:safe} is presented in Appendix~\ref{app:b}. In Theorem~\ref{thm:safe}, we count $t$ from the beginning of expansion stage. We choose $T_0 = t^*$ with $T_0$ the expansion time defined in Section~\ref{sec:algorithm}. We show that with high probability, the expansion stage of \algo guarantees safety, and expands the initial safe region $S_0$ to an $\epsilon$-reachable set after at most $t^*$ iterations. The size of $t^*$ depends on the largest size of safe region $\Rbo(S_0)$, the accuracy parameters $\epsilon$, $\zeta$, the confidence parameter $\delta$, the complexity of the function $B$ and the parameterization of the GP via $\gamma_t$.

The proof is based on the following idea. Within a stage, wherein $S_t$ does not expand, the uncertainty $w_t(\*x_t)$ monotonically decreases due to construction of $G_t$. We prove that, the condition $\max_{\*x\in G_t}w(\*x)<\epsilon$ implies either of two possibilities: $S_t$ will expand after the next evaluation, i.e., the reachable region will increase, and, hence, the next stage shall commence; or, we have already established all decisions within $\Rbeps(S_0)$ as safe, i.e., $S_t = \Rbeps(S_0)$. To establish the sample complexity we use a bound on how quickly $w_t(\*x_t)$ decreases.

\begin{theorem}
\label{thm:opt}
Suppose utility function $f$ satisfies $\|f\|^2_k \leq B$, $\delta \in (0,1)$, and noise $n_t$ is $\sigma$-sub-Gaussian. $\beta_t = B + \sigma \sqrt{2(\gamma_{t-1} + 1 +\log(1/\delta))}$ with utility function hyperparameters.  $T_1$ the time horizon for optimization stage. Fix any $\zeta>0$.
Suppose we run the optimization stage of \algo within the expansion stage safe region $\Reps^{T_0}(S_0)$.
Let $Y$ be the smallest positive integer satisfying 
$$\frac{4\sqrt{2}}{\sqrt{Y}}(B\sqrt{\gamma_Y} + \sigma \sqrt{2\gamma_Y(\gamma_Y + 1 +\log(1/\delta))}) \leq \zeta$$
Then with probability at least $1-\delta$, \algo finds $\zeta$-optimal utility value: $f(\hat{\*x}^*) \geq f(\*x^*) - \zeta$.
\end{theorem}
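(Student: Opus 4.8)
The plan is to recognize Theorem~\ref{thm:opt} as the GP-UCB cumulative-regret bound of \citet{srinivas10} (in the RKHS form of \citet{chowdhury2017kernelized}), converted into a simple-regret statement, with the one new observation that searching only over $S_t$ instead of all of $D$ is harmless because the reachable optimum always lies in $S_t$. Concretely the argument has four moving parts: (i) validity of the GP confidence bounds under the prescribed $\beta_t$; (ii) a per-round inequality $f(\*x^*)-f(\*x_t)\le 2\beta_t\sigma_{t-1}^f(\*x_t)$ from the UCB rule; (iii) a Cauchy--Schwarz step combined with the information-gain bound $\sum_t(\sigma_{t-1}^f(\*x_t))^2 = O(\gamma_Y)$; and (iv) turning the resulting average-regret bound into a reportable $\zeta$-optimal decision.

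First I would invoke the self-normalized GP concentration lemma that justifies the choice in \eqref{eq:beta} (as in \citet{chowdhury2017kernelized}, or the discretization/union-bound version of \citet{srinivas10}): with probability at least $1-\delta$, simultaneously for every optimization round $t$ and every $\*x\in D$, $f(\*x)\in Q_t^f(\*x)$, i.e.\ $|f(\*x)-\mu_{t-1}^f(\*x)|\le\beta_t\sigma_{t-1}^f(\*x)$. Condition on this event. By Theorem~\ref{thm:safe}, the expansion stage ends with $S_{T_0}\supseteq\Reps^{T_0}(S_0)$, so the reachable optimum $\*x^*=\argmax_{\*x\in\Reps^{T_0}(S_0)}f(\*x)$ belongs to $S_t$ for every optimization round. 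Since \algo selects $\*x_t=\argmax_{\*x\in S_t}\mu_{t-1}^f(\*x)+\beta_t\sigma_{t-1}^f(\*x)$, optimism gives $\mu_{t-1}^f(\*x_t)+\beta_t\sigma_{t-1}^f(\*x_t)\ge\mu_{t-1}^f(\*x^*)+\beta_t\sigma_{t-1}^f(\*x^*)\ge f(\*x^*)$; combining with $f(\*x_t)\ge\mu_{t-1}^f(\*x_t)-\beta_t\sigma_{t-1}^f(\*x_t)$ yields $r_t\defeq f(\*x^*)-f(\*x_t)\le 2\beta_t\sigma_{t-1}^f(\*x_t)$.

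Then I would sum over the $Y$ optimization rounds. Using that $\beta_t$ is nondecreasing, $\sum_{t=1}^Y r_t\le 2\beta_Y\sum_{t=1}^Y\sigma_{t-1}^f(\*x_t)\le 2\beta_Y\sqrt{Y\sum_{t=1}^Y(\sigma_{t-1}^f(\*x_t))^2}$ by Cauchy--Schwarz, and the standard identity $\sum_t\log(1+\sigma^{-2}(\sigma_{t-1}^f(\*x_t))^2)\le 2\gamma_Y$ together with the elementary bound $s\le\frac{2}{\log(1+\sigma^{-2})}\log(1+s)$ on $[0,1]$ (valid under the usual normalization $k(\*x,\*x)\le 1$, so $\sigma_{t-1}^f(\*x)\le 1$) controls $\sum_t(\sigma_{t-1}^f(\*x_t))^2$ by $O(\gamma_Y)$. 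Hence $\min_{t\le Y}\beta_t\sigma_{t-1}^f(\*x_t)\le\frac1Y\sum_t\beta_t\sigma_{t-1}^f(\*x_t)=O\!\big(\beta_Y\sqrt{\gamma_Y/Y}\big)$. Taking $\hat{\*x}^*$ to be the evaluated point with largest lower confidence bound $\mu_{t-1}^f(\*x_t)-\beta_t\sigma_{t-1}^f(\*x_t)$ and letting $t_0$ attain the above minimum, optimism at round $t_0$ gives $f(\hat{\*x}^*)\ge\mu_{t_0-1}^f(\*x_{t_0})-\beta_{t_0}\sigma_{t_0-1}^f(\*x_{t_0})\ge f(\*x^*)-2\beta_{t_0}\sigma_{t_0-1}^f(\*x_{t_0})$, which is $\ge f(\*x^*)-\zeta$ as soon as $Y$ is large enough that the $O(\beta_Y\sqrt{\gamma_Y/Y})$ term drops below $\zeta$; plugging in $\beta_Y=B+\sigma\sqrt{2(\gamma_{Y-1}+1+\log(1/\delta))}$, bounding $\gamma_{Y-1}\le\gamma_Y$, and tracking the absolute constant reproduces the threshold condition in the statement.

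The only genuinely delicate step is (i): because the optimization-stage samples are chosen adaptively and sit on top of the adaptively chosen expansion-stage samples, the confidence bounds cannot be obtained by a fixed-design Gaussian argument but require the self-normalized martingale concentration of \citet{chowdhury2017kernelized} (equivalently the discretization trick of \citet{srinivas10}); I expect this to be where most of the appendix proof's care goes. Everything else is bookkeeping, though two points deserve a line of checking: that $\hat{\*x}^*$ is defined purely from observed quantities so that it is genuinely reportable, and that the information-gain telescoping in (iii) still holds when the posterior is additionally conditioned on the expansion-stage data (it does, since conditioning only shrinks the variances and the mutual-information identity is unchanged).
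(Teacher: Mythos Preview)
Your proposal is correct and follows essentially the same route as the paper: invoke the self-normalized concentration of \citet{chowdhury2017kernelized} to validate the confidence intervals, derive the per-round bound $r_t\le 2\beta_t\sigma_{t-1}^f(\*x_t)$ from optimism, sum using the information-gain inequality, and convert the resulting average-regret bound into a $\zeta$-optimal point. The paper's appendix proof is in fact briefer than you anticipate: it cites Lemma~4 of \citet{chowdhury2017kernelized} directly for $\sum_{t=1}^Y\sigma_{t-1}(\*x_t)\le\sqrt{4(Y+2)\gamma_Y}$ rather than redoing the Cauchy--Schwarz/telescoping argument, and for the last step it simply observes that $\bar r_Y\le\zeta$ forces the existence of some sampled $\hat{\*x}^*$ with $f(\hat{\*x}^*)\ge f(\*x^*)-\zeta$, without specifying a reportable rule. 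Your version is arguably more careful in two places the paper glosses over: you explicitly check that $\*x^*\in S_t$ (needed for the optimism inequality to go through when maximizing only over $S_t$), and you give a concrete, observable definition of $\hat{\*x}^*$ via the largest lower confidence bound.
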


The proof of Theorem~\ref{thm:opt} is  presented in Appendix~\ref{app:b}. We count $t$ from the beginning of the optimization stage in Theorem~\ref{thm:opt}. We  choose $T_1 = Y$ with $T_1$ the time horizon of optimization stage. We prove the existence of an $\epsilon$-optimal decision $\hat{\*x}^*$ within the expansion stage safe region. 

\textbf{Discussion.} \algo separates safe region expansion and utility function maximization into two distinct stages. Theorem~\ref{thm:safe} guarantees $\epsilon$-optimal expansion in the first stage within time horizon $T_0$. Theorem~\ref{thm:opt} guarantees $\zeta$-optimal utility value in the second stage within time horizon $T_1$. Compared to existing approaches which interleave between expansion and optimization, \algo does not require any similarity or comparability between safety and utility. In Section~\ref{sec:experiment} we show empirically that \algo is more efficient and far more natural for some applications.
% !TEX root =  ../main.tex
\section{Experimental Results}
\label{sec:experiment}

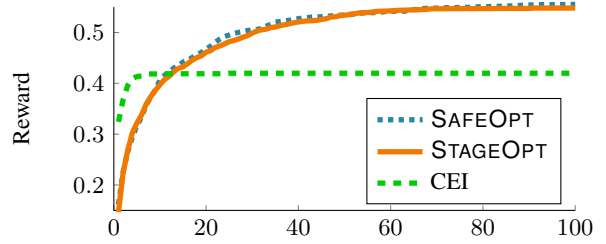
\begin{figure}[t]
\centering
  \setlength\figureheight{0.5\textwidth}
  \setlength\figurewidth{0.45\textwidth}
\begin{tikzpicture}
\smallskip

\begin{axis}[
tick label style={font=\footnotesize},
label style={font=\footnotesize},
legend style={font=\footnotesize},
legend pos=south east,
view={0}{90},
width=\figurewidth,
height=\figureheight/2,
ylabel={Reward},
xmin=0, xmax=100,
xtick={0, 20, 40, 60, 80, 100},
ymin=0.15, ymax=0.55,
major tick length=2pt,
axis lines*=left,
legend cell align=left,
clip=false]

% \addplot [
% color=darkgray,
% dotted,
% line width=0.5pt,
% ]
% coordinates{
% (40,0)(100,0)
% };

%%% SafeOpt
\addplot [
color=cyan!60!black,
dotted,
line width=2pt,
]
coordinates{
(1, 0.161576)(2, 0.224962)(3, 0.264920)(4, 0.293574)(5, 0.316228)(6, 0.333617)(7, 0.357171)(8, 0.373100)(9, 0.386949)(10, 0.406621)(11, 0.415690)(12, 0.422297)(13, 0.430128)(14, 0.434806)(15, 0.441893)(16, 0.445416)(17, 0.450754)(18, 0.455955)(19, 0.460989)(20, 0.467090)(21, 0.473311)(22, 0.481170)(23, 0.487176)(24, 0.492277)(25, 0.495008)(26, 0.498042)(27, 0.500286)(28, 0.501854)(29, 0.503912)(30, 0.505059)(31, 0.505956)(32, 0.508032)(33, 0.512380)(34, 0.516055)(35, 0.520434)(36, 0.521631)(37, 0.523089)(38, 0.523866)(39, 0.525419)(40, 0.527189)(41, 0.528644)(42, 0.529376)(43, 0.530522)(44, 0.530818)(45, 0.530936)(46, 0.531735)(47, 0.532532)(48, 0.533349)(49, 0.533958)(50, 0.534322)(51, 0.535560)(52, 0.536095)(53, 0.537821)(54, 0.538681)(55, 0.538809)(56, 0.539067)(57, 0.539149)(58, 0.540188)(59, 0.541435)(60, 0.541807)(61, 0.541845)(62, 0.541845)(63, 0.541998)(64, 0.543347)(65, 0.545020)(66, 0.545234)(67, 0.547157)(68, 0.547563)(69, 0.547682)(70, 0.548066)(71, 0.548066)(72, 0.548865)(73, 0.548956)(74, 0.549000)(75, 0.549139)(76, 0.549352)(77, 0.549526)(78, 0.550549)(79, 0.550549)(80, 0.550872)(81, 0.550875)(82, 0.551319)(83, 0.551400)(84, 0.551596)(85, 0.552278)(86, 0.552278)(87, 0.552737)(88, 0.553220)(89, 0.554043)(90, 0.554102)(91, 0.554563)(92, 0.554820)(93, 0.554820)(94, 0.554820)(95, 0.554820)(96, 0.554906)(97, 0.555015)(98, 0.555042)(99, 0.555223)(100, 0.555225)
};

%%% StageOpt
\addplot [
color=orange!95!black,
solid,
line width=2pt,
]
coordinates{
(1, 0.146507)(2, 0.223171)(3, 0.267792)(4, 0.302188)(5, 0.321759)(6, 0.337551)(7, 0.358793)(8, 0.373589)(9, 0.386291)(10, 0.397339)(11, 0.407536)(12, 0.414035)(13, 0.423018)(14, 0.429413)(15, 0.432441)(16, 0.438914)(17, 0.445634)(18, 0.450712)(19, 0.454298)(20, 0.461342)(21, 0.464492)(22, 0.470713)(23, 0.473514)(24, 0.477557)(25, 0.482088)(26, 0.484232)(27, 0.487274)(28, 0.490784)(29, 0.493968)(30, 0.498776)(31, 0.502888)(32, 0.505305)(33, 0.506585)(34, 0.509670)(35, 0.510901)(36, 0.513476)(37, 0.515250)(38, 0.517193)(39, 0.519216)(40, 0.520725)(41, 0.522138)(42, 0.522855)(43, 0.523139)(44, 0.523958)(45, 0.525782)(46, 0.527115)(47, 0.530269)(48, 0.530918)(49, 0.533154)(50, 0.534184)(51, 0.535053)(52, 0.536588)(53, 0.538368)(54, 0.538543)(55, 0.539293)(56, 0.540260)(57, 0.541684)(58, 0.541875)(59, 0.542122)(60, 0.542690)(61, 0.543207)(62, 0.543806)(63, 0.543917)(64, 0.544256)(65, 0.544784)(66, 0.544811)(67, 0.545188)(68, 0.545739)(69, 0.546620)(70, 0.546745)(71, 0.546745)(72, 0.546872)(73, 0.547020)(74, 0.547020)(75, 0.547020)(76, 0.547020)(77, 0.547020)(78, 0.547020)(79, 0.547020)(80, 0.547020)(81, 0.547020)(82, 0.547020)(83, 0.547397)(84, 0.547397)(85, 0.547397)(86, 0.547397)(87, 0.547397)(88, 0.547397)(89, 0.547397)(90, 0.547397)(91, 0.548134)(92, 0.548134)(93, 0.548134)(94, 0.548134)(95, 0.548134)(96, 0.548134)(97, 0.548134)(98, 0.548134)(99, 0.548134)(100, 0.548134)
};

%%% CEI
\addplot [
color=green!80!black,
dashed,
line width=2pt,
]
coordinates{
(1, 0.324199)(2, 0.367732)(3, 0.395531)(4, 0.408990)(5, 0.413465)(6, 0.414437)(7, 0.418388)(8, 0.418494)(9, 0.418494)(10, 0.418494)(11, 0.418777)(12, 0.418885)(13, 0.418885)(14, 0.418885)(15, 0.418885)(16, 0.418885)(17, 0.418885)(18, 0.418885)(19, 0.418885)(20, 0.418885)(21, 0.418885)(22, 0.418885)(23, 0.418885)(24, 0.418885)(25, 0.419891)(26, 0.419891)(27, 0.419891)(28, 0.419891)(29, 0.419891)(30, 0.419891)(31, 0.419891)(32, 0.419891)(33, 0.419891)(34, 0.419891)(35, 0.419891)(36, 0.419891)(37, 0.419891)(38, 0.419891)(39, 0.419891)(40, 0.419891)(41, 0.419891)(42, 0.419891)(43, 0.419891)(44, 0.419891)(45, 0.419891)(46, 0.419891)(47, 0.419891)(48, 0.419891)(49, 0.419891)(50, 0.419891)(51, 0.419891)(52, 0.419891)(53, 0.419891)(54, 0.419891)(55, 0.419891)(56, 0.419891)(57, 0.419891)(58, 0.419891)(59, 0.419891)(60, 0.419891)(61, 0.419891)(62, 0.419891)(63, 0.419891)(64, 0.419891)(65, 0.419891)(66, 0.419891)(67, 0.419891)(68, 0.419891)(69, 0.419891)(70, 0.419891)(71, 0.419891)(72, 0.419891)(73, 0.419891)(74, 0.419891)(75, 0.419891)(76, 0.419891)(77, 0.419891)(78, 0.419891)(79, 0.419891)(80, 0.419891)(81, 0.419891)(82, 0.419891)(83, 0.419891)(84, 0.419891)(85, 0.419891)(86, 0.419891)(87, 0.419891)(88, 0.419891)(89, 0.419891)(90, 0.419891)(91, 0.419891)(92, 0.419891)(93, 0.419891)(94, 0.419891)(95, 0.419891)(96, 0.419891)(97, 0.419891)(98, 0.419891)(99, 0.419891)(100, 0.419891)
};
\legend{\safeopt, \algo, CEI}

% \node[anchor=west] (source) at (axis cs:34,1){{\color{lime!50!black}\scriptsize\localucb}};
% \node (destination) at (axis cs:42,0.3){};
% \draw[->, gray, line width=1pt](source)--(destination);

% \node[anchor=west] (source) at (axis cs:60,0.8){{\color{cyan!50!black}\scriptsize\algo}};
% \node (destination) at (axis cs:67,0.025){};
% \draw[->, gray, line width=1pt](source)--(destination);

% \node[anchor=west] (source) at (axis cs:9,-0.25){{\color{orange!60!black}\scriptsize\gpucb}};
% \node (destination) at (axis cs:21,0.33){};
% \draw[->, gray, line width=1pt](source)--(destination);

\end{axis}

\end{tikzpicture}
\caption{Comparison between \safeopt, \algo, and constrained EI on synthetic data with one safety function.  In this simple setting, both \safeopt and \algo perform similarly. In order to achieve the same level of safety guarantees, constrained EI must be much more careful during exploration, and consequently fails to identify the optimal point.}
\label{fig:cei_compare}
\end{figure}

\begin{figure*}[t!]
  \centering
  \setlength\figureheight{0.45\textwidth}
  \setlength\figurewidth{0.3\textwidth}
  \begin{subfigure}[t]{0.3\textwidth}
    \begin{tikzpicture}

\begin{axis}[
tick label style={font=\footnotesize},
label style={font=\footnotesize},
legend style={nodes={scale=0.6, transform shape}},
legend pos=south east,
view={0}{90},
width=\figurewidth,
height=\figureheight/2,
ylabel={Reward},
xmin=40, xmax=100,
xtick={40, 60, 80, 100},
ymin=2.04, ymax=2.15,
major tick length=2pt,
axis lines*=left,
legend cell align=left,
clip=false]

% \addplot [
% color=darkgray,
% dotted,
% line width=0.5pt,
% ]
% coordinates{
% (40,0)(100,0)
% };

%%% SafeOpt
\addplot [
color=cyan!60!black,
dotted,
line width=2pt,
]
coordinates{
(40, 2.058297)(41, 2.070389)(42, 2.072232)(43, 2.072232)(44, 2.092996)(45, 2.096473)(46, 2.099558)(47, 2.104734)(48, 2.107040)(49, 2.110187)(50, 2.110187)(51, 2.110187)(52, 2.113664)(53, 2.114343)(54, 2.116883)(55, 2.116883)(56, 2.118919)(57, 2.121041)(58, 2.121093)(59, 2.121093)(60, 2.121093)(61, 2.123862)(62, 2.123862)(63, 2.123862)(64, 2.123862)(65, 2.123862)(66, 2.123862)(67, 2.123862)(68, 2.123862)(69, 2.123862)(70, 2.123862)(71, 2.123862)(72, 2.123862)(73, 2.123862)(74, 2.123862)(75, 2.123862)(76, 2.123862)(77, 2.126494)(78, 2.126934)(79, 2.126934)(80, 2.127061)(81, 2.127061)(82, 2.127061)(83, 2.127061)(84, 2.127061)(85, 2.127061)(86, 2.127061)(87, 2.127501)(88, 2.127501)(89, 2.127501)(90, 2.127501)(91, 2.127501)(92, 2.127501)(93, 2.127501)(94, 2.127501)(95, 2.132910)(96, 2.135120)(97, 2.135120)(98, 2.135120)(99, 2.142284)(100, 2.149869)
};

%%% StageOpt
\addplot [
color=orange!95!black,
solid,
line width=2pt,
]
coordinates{
(40, 2.127843)(41, 2.127843)(42, 2.127843)(43, 2.127843)(44, 2.127843)(45, 2.129879)(46, 2.129879)(47, 2.129879)(48, 2.129879)(49, 2.129879)(50, 2.129879)(51, 2.129879)(52, 2.129879)(53, 2.129879)(54, 2.129879)(55, 2.129879)(56, 2.129879)(57, 2.129879)(58, 2.140199)(59, 2.140199)(60, 2.140199)(61, 2.140199)(62, 2.140199)(63, 2.140199)(64, 2.140199)(65, 2.140199)(66, 2.140199)(67, 2.140199)(68, 2.140199)(69, 2.140199)(70, 2.140199)(71, 2.140199)(72, 2.140199)(73, 2.142491)(74, 2.142491)(75, 2.142491)(76, 2.142491)(77, 2.142491)(78, 2.142491)(79, 2.142491)(80, 2.142491)(81, 2.142491)(82, 2.142491)(83, 2.142491)(84, 2.142491)(85, 2.142491)(86, 2.142491)(87, 2.142491)(88, 2.142491)(89, 2.142491)(90, 2.142491)(91, 2.142491)(92, 2.142491)(93, 2.142491)(94, 2.142491)(95, 2.142491)(96, 2.142491)(97, 2.142491)(98, 2.142491)(99, 2.142491)(100, 2.142491)
};
\legend{\safeopt, \algo}

% \node[anchor=west] (source) at (axis cs:34,1){{\color{lime!50!black}\scriptsize\localucb}};
% \node (destination) at (axis cs:42,0.3){};
% \draw[->, gray, line width=1pt](source)--(destination);

% \node[anchor=west] (source) at (axis cs:60,0.8){{\color{cyan!50!black}\scriptsize\algo}};
% \node (destination) at (axis cs:67,0.025){};
% \draw[->, gray, line width=1pt](source)--(destination);

% \node[anchor=west] (source) at (axis cs:9,-0.25){{\color{orange!60!black}\scriptsize\gpucb}};
% \node (destination) at (axis cs:21,0.33){};
% \draw[->, gray, line width=1pt](source)--(destination);

\end{axis}

\end{tikzpicture}
    \caption{1 safety function reward}
  \end{subfigure}
  \hfill
  \begin{subfigure}[t]{0.3\textwidth}
    \begin{tikzpicture}

\begin{axis}[
tick label style={font=\footnotesize},
label style={font=\footnotesize},
legend style={nodes={scale=0.6, transform shape}},
legend pos=south east,
view={0}{90},
width=\figurewidth,
height=\figureheight/2,
xmin=40, xmax=100,
xtick={40, 60, 80, 100},
ymin=1.0, ymax=1.3,
ylabel={Reward},
major tick length=2pt,
axis lines*=left,
legend cell align=left,
clip=false]

% \addplot [
% color=darkgray,
% dotted,
% line width=0.5pt,
% ]
% coordinates{
% (40,0)(100,0)
% };

%%% SafeOpt
\addplot [
color=cyan!60!black,
dotted,
line width=2pt,
]
coordinates{
(40, 1.050902)(41, 1.052510)(42, 1.052753)(43, 1.058385)(44, 1.061063)(45, 1.067788)(46, 1.076085)(47, 1.077965)(48, 1.080115)(49, 1.080115)(50, 1.082709)(51, 1.092014)(52, 1.092014)(53, 1.094016)(54, 1.106885)(55, 1.106885)(56, 1.106885)(57, 1.106885)(58, 1.109001)(59, 1.111954)(60, 1.114727)(61, 1.114727)(62, 1.115157)(63, 1.115157)(64, 1.115332)(65, 1.116097)(66, 1.120665)(67, 1.123391)(68, 1.123940)(69, 1.124701)(70, 1.125462)(71, 1.125462)(72, 1.130001)(73, 1.130001)(74, 1.130021)(75, 1.130021)(76, 1.131081)(77, 1.131081)(78, 1.133963)(79, 1.136799)(80, 1.139497)(81, 1.140440)(82, 1.140440)(83, 1.140440)(84, 1.140440)(85, 1.140440)(86, 1.141185)(87, 1.141185)(88, 1.141185)(89, 1.143964)(90, 1.143964)(91, 1.144513)(92, 1.144513)(93, 1.144513)(94, 1.144513)(95, 1.144513)(96, 1.144513)(97, 1.144513)(98, 1.144513)(99, 1.144513)(100, 1.144513)
};

%%% StageOpt
\addplot [
color=orange!95!black,
solid,
line width=2pt,
]
coordinates{
(40, 1.240061)(41, 1.241003)(42, 1.241003)(43, 1.241979)(44, 1.242687)(45, 1.244056)(46, 1.244056)(47, 1.244056)(48, 1.244605)(49, 1.247440)(50, 1.254360)(51, 1.254360)(52, 1.254360)(53, 1.254360)(54, 1.254360)(55, 1.254360)(56, 1.254360)(57, 1.254360)(58, 1.254360)(59, 1.254360)(60, 1.254360)(61, 1.254360)(62, 1.254360)(63, 1.254360)(64, 1.254360)(65, 1.254360)(66, 1.254360)(67, 1.254360)(68, 1.254360)(69, 1.254360)(70, 1.254360)(71, 1.254360)(72, 1.254360)(73, 1.254360)(74, 1.254360)(75, 1.254360)(76, 1.254360)(77, 1.254360)(78, 1.254360)(79, 1.254360)(80, 1.254360)(81, 1.254360)(82, 1.254360)(83, 1.254360)(84, 1.254360)(85, 1.254360)(86, 1.254360)(87, 1.254360)(88, 1.254360)(89, 1.254360)(90, 1.254360)(91, 1.254360)(92, 1.254360)(93, 1.254360)(94, 1.254360)(95, 1.254360)(96, 1.254360)(97, 1.254360)(98, 1.254360)(99, 1.254360)(100, 1.254360)
};
\legend{\safeopt, \algo}

% \node[anchor=west] (source) at (axis cs:34,1){{\color{lime!50!black}\scriptsize\localucb}};
% \node (destination) at (axis cs:42,0.3){};
% \draw[->, gray, line width=1pt](source)--(destination);

% \node[anchor=west] (source) at (axis cs:60,0.8){{\color{cyan!50!black}\scriptsize\algo}};
% \node (destination) at (axis cs:67,0.025){};
% \draw[->, gray, line width=1pt](source)--(destination);

% \node[anchor=west] (source) at (axis cs:9,-0.25){{\color{orange!60!black}\scriptsize\gpucb}};
% \node (destination) at (axis cs:21,0.33){};
% \draw[->, gray, line width=1pt](source)--(destination);

\end{axis}

\end{tikzpicture}
    \caption{3 safety functions reward}
  \end{subfigure}
  \hfill
  \begin{subfigure}[t]{0.3\textwidth}
    \begin{tikzpicture}

\begin{axis}[
tick label style={font=\footnotesize},
label style={font=\footnotesize},
legend style={nodes={scale=0.6, transform shape}},
legend pos=south east,
view={0}{90},
width=\figurewidth,
height=\figureheight/2,
xmin=40, xmax=100,
xtick={40, 60, 80, 100},
ymin=2.6, ymax=2.69,
ylabel={Reward},
major tick length=2pt,
axis lines*=left,
legend cell align=left,
clip=false]

% \addplot [
% color=darkgray,
% dotted,
% line width=0.5pt,
% ]
% coordinates{
% (40,0)(100,0)
% };

%%% SafeOpt
\addplot [
color=cyan!60!black,
dotted,
line width=2pt,
]
coordinates{
(40, 2.613479)(41, 2.614409)(42, 2.620247)(43, 2.620786)(44, 2.626781)(45, 2.628340)(46, 2.628340)(47, 2.630370)(48, 2.631850)(49, 2.632808)(50, 2.634249)(51, 2.635127)(52, 2.637928)(53, 2.642237)(54, 2.642850)(55, 2.643525)(56, 2.646342)(57, 2.646754)(58, 2.650951)(59, 2.650951)(60, 2.652199)(61, 2.658405)(62, 2.659650)(63, 2.661631)(64, 2.663857)(65, 2.664781)(66, 2.665499)(67, 2.665499)(68, 2.666259)(69, 2.669478)(70, 2.669764)(71, 2.669764)(72, 2.672771)(73, 2.672771)(74, 2.672771)(75, 2.672771)(76, 2.674034)(77, 2.674560)(78, 2.674560)(79, 2.674560)(80, 2.674560)(81, 2.677361)(82, 2.679586)(83, 2.679870)(84, 2.684229)(85, 2.684229)(86, 2.684365)(87, 2.684365)(88, 2.684467)(89, 2.685227)(90, 2.685227)(91, 2.685227)(92, 2.685227)(93, 2.685227)(94, 2.685227)(95, 2.685459)(96, 2.685459)(97, 2.685643)(98, 2.685643)(99, 2.685643)(100, 2.685643)
};
\legend{\safeopt, \algo}

%%% StageOpt
\addplot [
color=orange!95!black,
solid,
line width=2pt,
]
coordinates{
(40, 2.617215)(41, 2.620823)(42, 2.620823)(43, 2.625218)(44, 2.625218)(45, 2.629327)(46, 2.629327)(47, 2.629611)(48, 2.629920)(49, 2.630358)(50, 2.636554)(51, 2.639011)(52, 2.645969)(53, 2.647318)(54, 2.649916)(55, 2.651413)(56, 2.657366)(57, 2.657366)(58, 2.657366)(59, 2.657366)(60, 2.658603)(61, 2.660831)(62, 2.660831)(63, 2.660831)(64, 2.661908)(65, 2.661908)(66, 2.661908)(67, 2.661908)(68, 2.661908)(69, 2.661908)(70, 2.664133)(71, 2.664807)(72, 2.664807)(73, 2.664807)(74, 2.665093)(75, 2.665093)(76, 2.667227)(77, 2.667987)(78, 2.667987)(79, 2.667987)(80, 2.667987)(81, 2.667987)(82, 2.670212)(83, 2.670212)(84, 2.670212)(85, 2.670212)(86, 2.670212)(87, 2.670212)(88, 2.670212)(89, 2.670212)(90, 2.670212)(91, 2.670212)(92, 2.670212)(93, 2.670212)(94, 2.670212)(95, 2.670212)(96, 2.670212)(97, 2.670212)(98, 2.670212)(99, 2.670212)(100, 2.670212)
};

% \node[anchor=west] (source) at (axis cs:34,1){{\color{lime!50!black}\scriptsize\localucb}};
% \node (destination) at (axis cs:42,0.3){};
% \draw[->, gray, line width=1pt](source)--(destination);

% \node[anchor=west] (source) at (axis cs:60,0.8){{\color{cyan!50!black}\scriptsize\algo}};
% \node (destination) at (axis cs:67,0.025){};
% \draw[->, gray, line width=1pt](source)--(destination);

% \node[anchor=west] (source) at (axis cs:9,-0.25){{\color{orange!60!black}\scriptsize\gpucb}};
% \node (destination) at (axis cs:21,0.33){};
% \draw[->, gray, line width=1pt](source)--(destination);

\end{axis}

\end{tikzpicture}
    \caption{1 safety function, dueling feedback reward}
  \end{subfigure}
  \\
  \begin{subfigure}[t]{0.3\textwidth}
    \begin{tikzpicture}

\begin{axis}[
tick label style={font=\footnotesize},
label style={font=\footnotesize},
legend style={nodes={scale=0.6, transform shape}},
legend pos=south east,
view={0}{90},
width=\figurewidth,
height=\figureheight/2,
ylabel=Safe region size,
xmin=40, xmax=100,
xtick={40, 60, 80, 100},
ymin=50, ymax=80,
major tick length=2pt,
axis lines*=left,
legend cell align=left,
clip=false]

% \addplot [
% color=darkgray,
% dotted,
% line width=0.5pt,
% ]
% coordinates{
% (40,0)(100,0)
% };

%%% SafeOpt
\addplot [
color=cyan!60!black,
dotted,
line width=2pt,
]
coordinates{
(40, 59.959459)(41, 60.189189)(42, 60.364865)(43, 60.500000)(44, 60.756757)(45, 60.864865)(46, 60.986486)(47, 61.162162)(48, 61.378378)(49, 61.594595)(50, 61.689189)(51, 61.918919)(52, 62.162162)(53, 62.270270)(54, 62.540541)(55, 62.648649)(56, 62.959459)(57, 63.162162)(58, 63.445946)(59, 63.621622)(60, 63.837838)(61, 64.229730)(62, 64.391892)(63, 64.567568)(64, 64.810811)(65, 64.932432)(66, 65.067568)(67, 65.175676)(68, 65.189189)(69, 65.324324)(70, 65.540541)(71, 65.635135)(72, 65.756757)(73, 65.864865)(74, 65.905405)(75, 66.054054)(76, 66.216216)(77, 66.270270)(78, 66.472973)(79, 66.729730)(80, 66.729730)(81, 66.878378)(82, 66.891892)(83, 66.986486)(84, 67.040541)(85, 67.094595)(86, 67.162162)(87, 67.162162)(88, 67.297297)(89, 67.472973)(90, 67.567568)(91, 67.608108)(92, 67.581081)(93, 67.648649)(94, 67.716216)(95, 67.810811)(96, 67.891892)(97, 67.891892)(98, 68.000000)(99, 68.040541)(100, 68.094595)
};

%%% StageOpt
\addplot [
color=orange!95!black,
solid,
line width=2pt,
]
coordinates{
(40, 72.594595)(41, 72.797297)(42, 73.013514)(43, 73.405405)(44, 73.337838)(45, 73.310811)(46, 73.567568)(47, 73.689189)(48, 73.797297)(49, 73.864865)(50, 73.878378)(51, 73.837838)(52, 73.797297)(53, 73.864865)(54, 74.013514)(55, 74.094595)(56, 74.162162)(57, 74.189189)(58, 74.283784)(59, 74.310811)(60, 74.391892)(61, 74.378378)(62, 74.351351)(63, 74.378378)(64, 74.459459)(65, 74.445946)(66, 74.418919)(67, 74.418919)(68, 74.459459)(69, 74.486486)(70, 74.472973)(71, 74.486486)(72, 74.459459)(73, 74.513514)(74, 74.500000)(75, 74.540541)(76, 74.500000)(77, 74.500000)(78, 74.459459)(79, 74.432432)(80, 74.445946)(81, 74.486486)(82, 74.459459)(83, 74.513514)(84, 74.540541)(85, 74.527027)(86, 74.500000)(87, 74.500000)(88, 74.540541)(89, 74.540541)(90, 74.554054)(91, 74.540541)(92, 74.554054)(93, 74.594595)(94, 74.581081)(95, 74.540541)(96, 74.567568)(97, 74.554054)(98, 74.567568)(99, 74.581081)(100, 74.540541)
};
\legend{\safeopt, \algo}

% \node[anchor=west] (source) at (axis cs:34,1){{\color{lime!50!black}\scriptsize\localucb}};
% \node (destination) at (axis cs:42,0.3){};
% \draw[->, gray, line width=1pt](source)--(destination);

% \node[anchor=west] (source) at (axis cs:60,0.8){{\color{cyan!50!black}\scriptsize\algo}};
% \node (destination) at (axis cs:67,0.025){};
% \draw[->, gray, line width=1pt](source)--(destination);

% \node[anchor=west] (source) at (axis cs:9,-0.25){{\color{orange!60!black}\scriptsize\gpucb}};
% \node (destination) at (axis cs:21,0.33){};
% \draw[->, gray, line width=1pt](source)--(destination);

\end{axis}

\end{tikzpicture}
    \caption{1 safety function safe region sizes}
  \end{subfigure}
  \hfill
  \begin{subfigure}[t]{0.3\textwidth}
    \begin{tikzpicture}

\begin{axis}[
tick label style={font=\footnotesize},
label style={font=\footnotesize},
legend style={nodes={scale=0.6, transform shape}},
legend pos=south east,
view={0}{90},
width=\figurewidth,
height=\figureheight/2,
xmin=40, xmax=100,
xtick={40, 60, 80, 100},
ymin=25, ymax=35,
ylabel={Safe region size},
major tick length=2pt,
axis lines*=left,
legend cell align=left,
clip=false]

% \addplot [
% color=darkgray,
% dotted,
% line width=0.5pt,
% ]
% coordinates{
% (40,0)(100,0)
% };

%%% SafeOpt
\addplot [
color=cyan!60!black,
dotted,
line width=2pt,
]
coordinates{
(40, 26.833333)(41, 26.923077)(42, 26.987179)(43, 27.083333)(44, 27.121795)(45, 27.198718)(46, 27.320513)(47, 27.352564)(48, 27.391026)(49, 27.461538)(50, 27.570513)(51, 27.608974)(52, 27.673077)(53, 27.750000)(54, 27.820513)(55, 27.903846)(56, 27.974359)(57, 27.961538)(58, 28.006410)(59, 28.057692)(60, 28.102564)(61, 28.134615)(62, 28.205128)(63, 28.211538)(64, 28.256410)(65, 28.275641)(66, 28.352564)(67, 28.397436)(68, 28.461538)(69, 28.532051)(70, 28.570513)(71, 28.596154)(72, 28.621795)(73, 28.660256)(74, 28.711538)(75, 28.750000)(76, 28.801282)(77, 28.839744)(78, 28.891026)(79, 28.916667)(80, 28.942308)(81, 28.987179)(82, 29.032051)(83, 29.038462)(84, 29.044872)(85, 29.089744)(86, 29.102564)(87, 29.134615)(88, 29.224359)(89, 29.307692)(90, 29.320513)(91, 29.352564)(92, 29.397436)(93, 29.416667)(94, 29.442308)(95, 29.487179)(96, 29.506410)(97, 29.500000)(98, 29.512821)(99, 29.532051)(100, 29.544872)
};

%%% StageOpt
\addplot [
color=orange!95!black,
solid,
line width=2pt,
]
coordinates{
(40, 31.685897)(41, 31.756410)(42, 31.801282)(43, 31.807692)(44, 31.852564)(45, 31.846154)(46, 31.865385)(47, 31.891026)(48, 31.897436)(49, 31.903846)(50, 31.929487)(51, 31.948718)(52, 31.948718)(53, 31.942308)(54, 31.967949)(55, 31.980769)(56, 31.967949)(57, 31.980769)(58, 31.967949)(59, 31.987179)(60, 31.980769)(61, 31.967949)(62, 31.980769)(63, 31.993590)(64, 31.987179)(65, 31.987179)(66, 32.012821)(67, 32.000000)(68, 32.019231)(69, 32.032051)(70, 32.032051)(71, 32.019231)(72, 32.051282)(73, 32.057692)(74, 32.051282)(75, 32.057692)(76, 32.064103)(77, 32.070513)(78, 32.057692)(79, 32.057692)(80, 32.064103)(81, 32.051282)(82, 32.057692)(83, 32.064103)(84, 32.070513)(85, 32.064103)(86, 32.064103)(87, 32.064103)(88, 32.064103)(89, 32.057692)(90, 32.051282)(91, 32.051282)(92, 32.057692)(93, 32.057692)(94, 32.064103)(95, 32.051282)(96, 32.051282)(97, 32.051282)(98, 32.051282)(99, 32.051282)(100, 32.051282)
};
\legend{\safeopt, \algo}

% \node[anchor=west] (source) at (axis cs:34,1){{\color{lime!50!black}\scriptsize\localucb}};
% \node (destination) at (axis cs:42,0.3){};
% \draw[->, gray, line width=1pt](source)--(destination);

% \node[anchor=west] (source) at (axis cs:60,0.8){{\color{cyan!50!black}\scriptsize\algo}};
% \node (destination) at (axis cs:67,0.025){};
% \draw[->, gray, line width=1pt](source)--(destination);

% \node[anchor=west] (source) at (axis cs:9,-0.25){{\color{orange!60!black}\scriptsize\gpucb}};
% \node (destination) at (axis cs:21,0.33){};
% \draw[->, gray, line width=1pt](source)--(destination);

\end{axis}

\end{tikzpicture}
    \caption{3 safety functions safe region sizes}
  \end{subfigure}
  \hfill
  \begin{subfigure}[t]{0.3\textwidth}
    \begin{tikzpicture}

\begin{axis}[
tick label style={font=\footnotesize},
label style={font=\footnotesize},
legend style={nodes={scale=0.6, transform shape}},
legend pos=south east,
view={0}{90},
width=\figurewidth,
height=\figureheight/2,
xmin=40, xmax=100,
xtick={40, 60, 80, 100},
ymin=106, ymax=110,
ylabel={Safe region size},
major tick length=2pt,
axis lines*=left,
legend cell align=left,
clip=false]

% \addplot [
% color=darkgray,
% dotted,
% line width=0.5pt,
% ]
% coordinates{
% (40,0)(100,0)
% };

%%% SafeOpt
\addplot [
color=cyan!60!black,
dotted,
line width=2pt,
]
coordinates{
(40, 106.248555)(41, 106.358382)(42, 106.514451)(43, 106.641618)(44, 106.838150)(45, 106.976879)(46, 107.075145)(47, 107.271676)(48, 107.421965)(49, 107.537572)(50, 107.578035)(51, 107.653179)(52, 107.780347)(53, 107.890173)(54, 108.000000)(55, 108.208092)(56, 108.283237)(57, 108.398844)(58, 108.514451)(59, 108.653179)(60, 108.734104)(61, 108.768786)(62, 108.797688)(63, 108.884393)(64, 109.075145)(65, 109.132948)(66, 109.173410)(67, 109.306358)(68, 109.352601)(69, 109.375723)(70, 109.404624)(71, 109.491329)(72, 109.502890)(73, 109.514451)(74, 109.595376)(75, 109.693642)(76, 109.716763)(77, 109.722543)(78, 109.745665)(79, 109.791908)(80, 109.849711)(81, 109.965318)(82, 110.028902)(83, 110.017341)(84, 110.040462)(85, 110.121387)(86, 110.202312)(87, 110.236994)(88, 110.271676)(89, 110.271676)(90, 110.294798)(91, 110.294798)(92, 110.335260)(93, 110.375723)(94, 110.416185)(95, 110.450867)(96, 110.485549)(97, 110.549133)(98, 110.589595)(99, 110.595376)(100, 110.601156)
};

%%% StageOpt
\addplot [
color=orange!95!black,
solid,
line width=2pt,
]
coordinates{
(40, 107.537572)(41, 107.739884)(42, 107.786127)(43, 107.907514)(44, 107.994220)(45, 108.063584)(46, 108.213873)(47, 108.254335)(48, 108.352601)(49, 108.387283)(50, 108.473988)(51, 108.554913)(52, 108.612717)(53, 108.710983)(54, 108.774566)(55, 108.815029)(56, 108.901734)(57, 108.907514)(58, 108.988439)(59, 109.005780)(60, 109.080925)(61, 109.190751)(62, 109.167630)(63, 109.213873)(64, 109.242775)(65, 109.271676)(66, 109.306358)(67, 109.341040)(68, 109.404624)(69, 109.329480)(70, 109.404624)(71, 109.456647)(72, 109.468208)(73, 109.479769)(74, 109.497110)(75, 109.514451)(76, 109.566474)(77, 109.560694)(78, 109.549133)(79, 109.583815)(80, 109.583815)(81, 109.630058)(82, 109.739884)(83, 109.809249)(84, 109.826590)(85, 109.884393)(86, 109.901734)(87, 109.890173)(88, 109.872832)(89, 109.919075)(90, 109.942197)(91, 109.942197)(92, 109.936416)(93, 109.924855)(94, 109.930636)(95, 109.942197)(96, 109.947977)(97, 109.947977)(98, 109.971098)(99, 109.982659)(100, 109.988439)
};
\legend{\safeopt, \algo}

% \node[anchor=west] (source) at (axis cs:34,1){{\color{lime!50!black}\scriptsize\localucb}};
% \node (destination) at (axis cs:42,0.3){};
% \draw[->, gray, line width=1pt](source)--(destination);

% \node[anchor=west] (source) at (axis cs:60,0.8){{\color{cyan!50!black}\scriptsize\algo}};
% \node (destination) at (axis cs:67,0.025){};
% \draw[->, gray, line width=1pt](source)--(destination);

% \node[anchor=west] (source) at (axis cs:9,-0.25){{\color{orange!60!black}\scriptsize\gpucb}};
% \node (destination) at (axis cs:21,0.33){};
% \draw[->, gray, line width=1pt](source)--(destination);

\end{axis}

\end{tikzpicture}
    \caption{1 safety function, dueling feedback safe region sizes}
  \end{subfigure}
  \caption{Results on three synthetic scenarios. The first row corresponds to the reward and the second row to the growth of the safe region sizes (higher is better for both). In both of these metrics, \algo performs at least as well as \safeopt. For clarity, we omit the first 40 iterations for each setting since the algorithms similarly expand the safe region during that phase.}
  \label{fig:results}
\end{figure*}
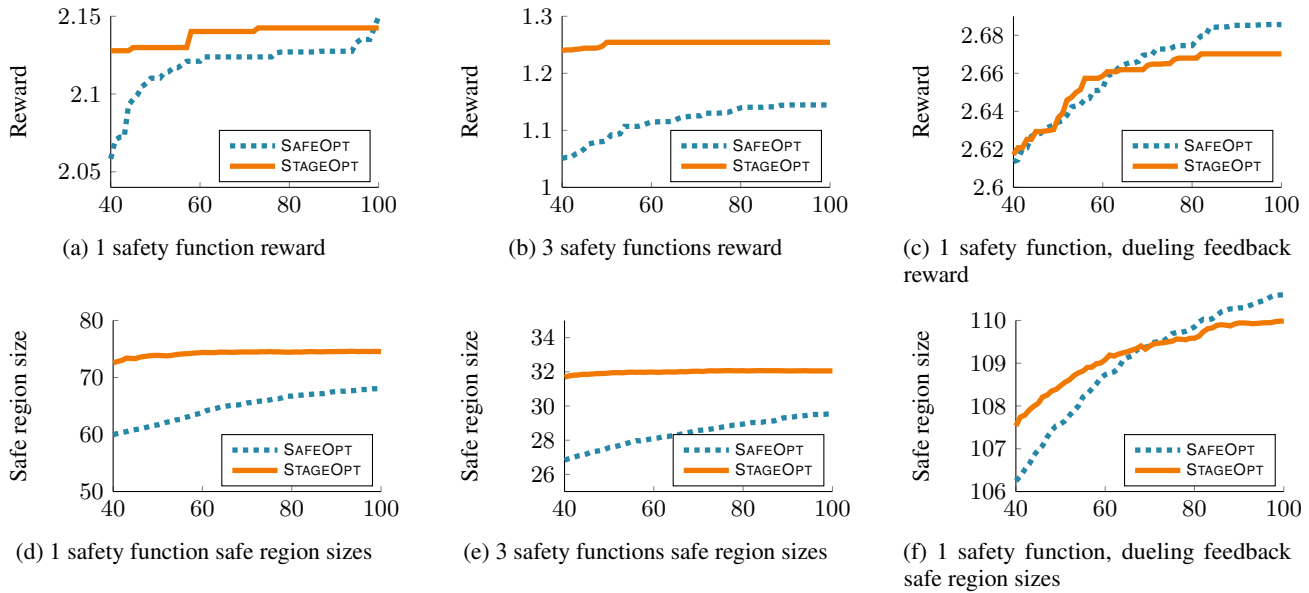

We evaluated our algorithm on synthetic data as well as on a live clinical experiment on spinal cord therapy. 

\paragraph{Modified \algo and \safeopt.}
In real applications, it may be difficult to compute an accurate estimate of the Lipschitz constants, which may have an adverse effect on the definition of the safe region and its expansion dynamics. In these scenarios, one can use a modified version of \safeopt that defines safe points using only the GPs \citep{berkenkamp16safe}. This modification can be directly applied to \algo as well; for clarity, we state the details here. Under this alternative definition, a point is classified as safe if the lower confidence bound of each of its safety GPs lies above the respective threshold:
$$S_t = \bigcap_{i}{\bigsetdef{\*x \in D}{\ell_t^i(\*x) \geq h_i}}.$$
A safe point is then an expander if an optimistic noiseless measurement of its upper confidence bound results in a non-safe point having all of its lower confidence bounds above the respective thresholds:
$$e_t(\*x) \defeq \Big|\bigcap_i \bigsetdef{\*x' \in D \setminus S_t}{\ell_{t, u_t(\*x)}(\*x) \geq h_i}\Big|.$$

% Does \algo reliably respect the safety constraints? How effective is it in localizing good solutions quickly? How does it compare against standard (non-safe) bandit algorithms? In particular, we compare \algo against \gpucb \citep{srinivas10}, a multi-armed bandit algorithm designed for Gaussian processes, which however does not respect the safety constraint (see Equation~\ref{eqn:gpucb}); and \localucb, a heuristic variant of \gpucb, which selects the sample maximizing the upper confidence bound (similar to \gpucb), however only among sampling locations that are safe with high probability according to the GP posterior. In particular, it greedily selects
% \begin{equation*}\label{eqn:gpucb}\*x_{t}=\argmax_{\*x\in S_t}\mu_{t-1}(\*x)+\beta_t^{1/2}\sigma_{t-1}(\*x).\end{equation*}

\subsection{Synthetic Data}

We evaluated on several synthetic settings with various types of safety constraints and feedback. In each setting, the utility function was sampled from a zero-mean GP with Mat\'ern kernel ($\nu=1.2$) over the space $D=[0,1]^2$ uniformly discretized into $25 \times 25$ points. We considered the following safety constraint settings:
% \begin{enumerate}
% \item 
(i) One safety function $g_1$ sampled from a zero-mean GP with a Mat\'ern kernel with $\nu=1.2$.
% \item 
(ii) Three safety functions $g_1, g_2, g_3$, sampled from zero-mean GPs with length scales 0.2, 0.4 and 0.8.
% \end{enumerate}

We set the amplitudes of the safety functions to be 0.1 that of the utility function, and the safety threshold for each safety function $g_i$ to be $\mu_i + \frac{1}{2}\sigma_i$. We define a point $x$ to be a \emph{safe seed} if it satisfies $g_i(x) > \mu_i + \sigma_i$.

We also considered several cases for feedback. For both safety settings, we examined the standard Gaussian noise-perturbed case, with $\sigma^2=0.0025$. We also ran experiments for the dueling feedback case and the first safety setting.

\textbf{Algorithms.} As discussed previously, \safeopt is the only other known algorithm that has similar guarantees in our setting, and serves as the main competitor to \algo. In addition, we also compared against the constrained Expected Improvement (CEI) algorithm from \citet{gelbart2014bayesian}. Since CEI only guarantees stepwise safety as opposed to over the entire time horizon, we set the safety threshold to be $\delta / T$ with $\delta = 0.1$ in order to match our setting. Naturally, with such a stringent threshold, CEI is not very competitive compared to \algo and \safeopt, as seen in Figure~\ref{fig:cei_compare}. In order to adequately distinguish between the latter two algorithms, we omit constrained EI results from all further figures.

\textbf{Results.} In each setting, we randomly sampled 30 combinations of utility and safety functions and ran \algo and \safeopt for $T=100$ iterations starting from each of 10 randomly sampled safe seeds. For \algo, we used a dynamic stopping criterion for the safe expansion phase (i.e. $T_0$) of when the safe region plateaus for 10 iterations, hard capped at 80 iterations. In these experiments, we primarily used GP-UCB in the utility optimization phase. We also tried two other common acquisition functions,  Expected Improvement (EI) and Maximum Probability of Improvement (MPI). However, we observed similar behavior between all three acquisition functions, since our algorithm quickly identifies the reachable safe region in most scenarios.

In Figure \ref{fig:results}, for each setting and algorithm, we plot both the growth of the size of the safe region as well as a notion of reward $r_t = \max_{1\leq i \leq t} f(\*x_i)$. Although there is some similarity between the performances of the algorithms, it is evident that \algo grows the safe region at least as fast as \safeopt, while also reaching a optimal sample point more quickly.

\subsection{Clinical Experiments}
We finally applied \algo to safely optimize clinical spinal cord stimulation in order to help tetraplegic patients regain physical mobility. The goal is to find effective stimulation therapies for patients with severe spinal cord injuries without introducing undesirable side effects. For example, bad stimulations could have negative effects on the rehabilitation and are often painful. This application is easily framed under our problem setting; the chosen configurations must stay above a safety threshold.

A total of 564 therapeutic trials were done with a tetraplegic patient in gripping experiments over 10 weeks. In each trial, one stimulating pattern was generated by the 32-channel-electrode, and was fixed within each trial. For a fixed electrode configuration, the stimulation frequency and amplitude were modulated synergistically in order to find those best for effective gripping. A similar setup was studied in \cite{sui2017correlational}. We optimized the electrode patterns with preference-based \algo (see Appendix \ref{app:duel}) and performed exhaustive search for stimulation frequency and amplitude over a narrow range.

\textbf{Results.} Figure~\ref{fig:scs_region} shows the reachable stimulating patterns by the algorithm under safety constraints. The physicians are confident that the total safe region has been reached between 300 and 400 iterations. In our experiments, \algo does not sample any unsafe stimulating patterns.

% \begin{figure}[t]
% \centering
% \includegraphics[width=0.4\textwidth]{}
% \caption{Expansion of the safe region for spinal cord injury therapy. Blue line represents the growth of safe region over time. Orange line shows the size of total input space regardless of safety concerns.}
% \label{fig:sc_safe}
% \end{figure}

\begin{figure}[t]
\centering
  \setlength\figureheight{0.45\textwidth}
  \setlength\figurewidth{0.45\textwidth}
\input{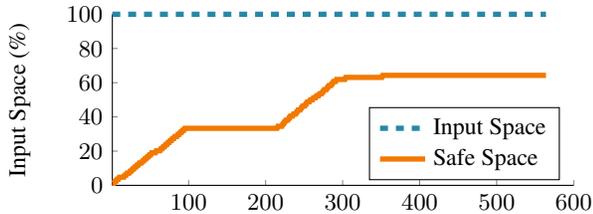}
\caption{Expansion of the safe region for spinal cord injury therapy. The orange solid line represents the growth of safe region over time, and the blue dashed line the total size of the input space.}
\label{fig:scs_region}
\end{figure}

\begin{figure}[t]
\centering
  \setlength\figureheight{0.45\textwidth}
  \setlength\figurewidth{0.45\textwidth}
\input{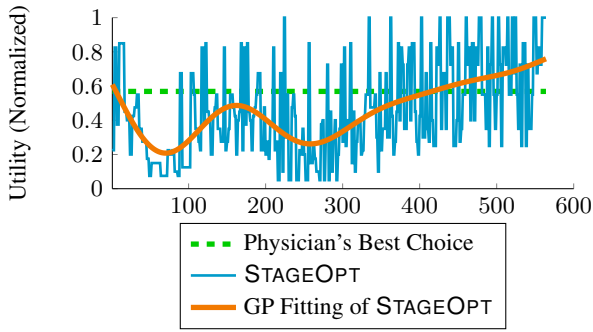}
\caption{Utilities within the safe region (larger is better). The green dashed line denotes the physician's best choice. The thin blue line shows the utilities of \algo at each iteration, and the orange solid line is a GP curve fitting of these utilities.}
\label{fig:scs_utility}
\end{figure}

Figure~\ref{fig:scs_utility} plots the utility measure of the stimulating pattern at each iteration. The orange solid line is a GP curve fitting of \algo (in thin blue). It clearly exceeds the physician's best choice (dotted green line) after around 400~iterations of online experiments.  These results demonstrate the practicality of \algo to safely optimize in challenging settings, such as those involving live human subjects.

% \begin{figure}[t]
% \centering
% \includegraphics[width=0.4\textwidth]{}
% \caption{Utility Sampled within the Safe Region. Blue line shows the utility at each iteration. Larger is better. Orange line represents a smooth curve fitting of the Blue. Green line notes the physician's best choice.}
% \label{fig:sc_utility}
% \end{figure}

% !TEX root =  ../main.tex
\section{Conclusion \& Discussion}
\label{conclusion}

In this paper, we study the problem of safe Bayesian optimization, which is well suited to any setting requiring safe online optimization such as medical therapies, safe recommender systems, and safe robotic control. We proposed a general framework, \algo, which is able to tackle non-comparable safety constraints and utility function. We provide strong theoretical guarantees for \algo with safety functions and utility function sampled from Gaussian processes. Specifically, we bound the sample complexity to achieve an $\epsilon$-safe region and $\zeta$-optimal utility value within the safe region. The whole sampling process is guaranteed to be safe with high probability.

We compared \algo with classical Bayesian optimization methods and state-of-the-art safe optimization algorithms. We evaluated multiple cases such as single safety function, multiple safety functions, real-valued utility, and dueling-feedback utility. Our extensive experiments on synthetic data show that \algo can achieve its theoretical guarantees on safety and optimality. Its performance on safe expansion is among the best and utility maximization outperforms the state-of-the-art. 

This result also provides an efficient tool for online optimization in safety-critical applications. %It is a practical oracle to provide clinical recommendation which explore for optimal stimuli with safety measure above the clinical threshold and exploiting high performing stimuli for SCI therapy. 
For instance, we applied \algo with dueling-feedback utility function on the gripping rehabilitation therapy for tetraplegic patients. Our clinical experiments demonstrated good performance in real human experiments of neural stimulation. The therapies proposed by \algo outperform the ones suggested by experienced physicians.

There are many interesting directions for future work.  For instance, we assume a static environment that does not evolve in response to the actions taken.  In our clinical application, this implies assuming that the patients' condition and response to stimuli do not improve over time.  Moving forward, it would be interesting to incorporate dynamics into our setting, which would lead to the multi-criteria safe reinforcement learning setting \cite{moldovan12safe,turchetta16safemdp,wachi2018safe}.

Another interesting direction is developing theoretically rigorous approaches outside of using Gaussian processes (GPs).  Although highly flexible, GPs require a well-specified prior and kernel in order to be effective. While one could use uniformed priors to model most settings, such priors tend to lead to very slow convergence.  One alternative is to automatically learn a good kernel \cite{wilson2016deep}.  Another approach is to assume a low-dimensional manifold within the high-dimensional uniformed kernel \cite{djolonga2013high}, which could also speed up learning.

% Acknowledgements should only appear in the accepted version.
\subsection*{Acknowledgements}
\vspace{-0.1in}
This research
was also supported in part by NSF Awards \#1564330 \&
\#1637598, JPL PDF IAMS100224, a Bloomberg Data Science
Research Grant, and a gift from Northrop Grumman.

% In the unusual situation where you want a paper to appear in the
% references without citing it in the main text, use \nocite
% \nocite{langley00}

\begin{small}
\bibliography{safe_bayesian}
\bibliographystyle{icml2018}
\end{small}

% DELETE THIS PART. DO NOT PLACE CONTENT AFTER THE REFERENCES!

% \newpage
% \ \

\newpage
% !TEX root =  ../main.tex
\appendix
\section{Proofs}
\label{app:b}

\paragraph{Note} Without specific notification, the following lemmas and corollaries holds within each stage and for all $f_i$'s. All following lemmas and corollaries hold for any $\varnothing \subsetneq S_0 \subseteq D$, $h\in\mathbb{R}$, $\delta \in (0, 1)$, and $\epsilon > 0$.

\begin{lemma} \label{lem:confidence}
Assume $\|f\|^2_k \leq B$, and suppose the observation noise $n_t$ is a $\sigma$-sub-Gaussian stochastic process. Choose $\beta_t = B + \sigma \sqrt{2(\gamma_{t-1} + 1 +\log(1/\delta))}$ as shown in Equation~\ref{eq:beta}. Then for all $\delta \in (0,1)$, with probability at least $1-\delta$, for all iterations $t$ during the expansion stage of \algo, and for all $\*x\in D$, it holds that $f(\*x)\in C_t(\*x)$.
\end{lemma}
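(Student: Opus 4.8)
The plan is to reduce the statement to the standard self-normalized concentration inequality for kernelized regression and then close the argument with a one-line induction on $t$, exploiting the nested structure $C_t = C_{t-1}\cap Q_t$.

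First I would invoke the RKHS confidence bound of \citet{srinivas10,chowdhury2017kernelized}: since $\|f\|^2_k \le B$ and the noise $n_t$ is $\sigma$-sub-Gaussian, for the choice $\beta_t = B + \sigma \sqrt{2(\gamma_{t-1} + 1 +\log(1/\delta))}$ it holds, with probability at least $1-\delta$, that simultaneously for every $t \ge 1$ and every $\*x\in D$,
\[
  |f(\*x) - \mu_{t-1}(\*x)| \le \beta_t\,\sigma_{t-1}(\*x).
\]
The point to verify here is that this result applies in our setting despite the query points $\*x_1,\dots,\*x_{t-1}$ being chosen adaptively (they depend on past observations through $S_t$ and $G_t$): this is exactly why one needs the martingale / self-normalized form of the bound rather than a fixed-design version, and that is precisely the form established in \citet{chowdhury2017kernelized}, with the union bound over $\*x\in D$ already absorbed into the RKHS-norm argument. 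Equivalently, on this $1-\delta$ event --- call it $\mathcal{E}$ --- we have $f(\*x)\in Q_t(\*x)$ for all $\*x\in D$ and all $t\ge 1$.

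Then I would argue by induction on $t$ that, on $\mathcal{E}$, $f(\*x)\in C_t(\*x)$ for all $\*x\in D$. For the base case $t=0$: for the utility GP, $C_0^f(\*x)=\mathbb{R}$, so $f(\*x)\in C_0^f(\*x)$ trivially; for a safety GP $g_i$, $C_0^i(\*x)=[h_i,\infty)$ on $S_0$ and the hypothesis $g_i(\*x)\ge h_i$ for all $\*x\in S_0$ gives $g_i(\*x)\in C_0^i(\*x)$ there, while $C_0^i(\*x)=\mathbb{R}$ off $S_0$. For the inductive step, assume $f(\*x)\in C_{t-1}(\*x)$; combining with $f(\*x)\in Q_t(\*x)$, which holds on $\mathcal{E}$, yields $f(\*x)\in C_{t-1}(\*x)\cap Q_t(\*x) = C_t(\*x)$. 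This completes the induction, and since the concentration event is intersected only once, the overall failure probability is at most $\delta$, proving the lemma.

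The main obstacle is essentially bookkeeping rather than a deep step: one must ensure the adaptively-chosen query points are covered (handled by using the self-normalized version of the concentration bound and its associated choice of $\beta_t$), and that the correctly-initialized, nonempty seed $S_0$ makes the base case of the induction go through for the safety functions. Everything else is immediate from the monotone intersection structure of the $C_t$'s.
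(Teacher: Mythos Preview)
Your proposal is correct and takes the same approach as the paper, which simply cites Theorem~2 of \citet{chowdhury2017kernelized} for the concentration bound $f(\*x)\in Q_t(\*x)$. You additionally spell out the short induction over the nested intervals $C_t=C_{t-1}\cap Q_t$ and check the base case, which is the right way to make the paper's one-line argument precise.
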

\begin{proof}
This lemma directly follows the Theorem~2 of \citet{chowdhury2017kernelized}. $C_t(\*x)$ represents the confidence interval which we construct in Section~\ref{sec:problem}.
\end{proof}

\begin{lemma}
  \label{lem:basics}
  For any $ t \geq 1$, the following properties hold:
  \begin{enumerate}[label=(\roman*)]
    \item $\forall \*x \in D, \ell_{t+1}(\*x) \geq \ell_t(\*x)$,
    \item $\forall \*x \in D, u_{t+1}(\*x) \leq u_t(\*x)$,
    \item $\forall \*x \in D, w_{t+1}(\*x) \leq w_t(\*x)$,
    \item $S_{t+1} \supseteq S_t \supseteq S_0$,
    \item $S \subseteq R \Rightarrow \Reps(S) \subseteq \Reps(R)$,
    \item $S \subseteq R \Rightarrow \Rbeps(S) \subseteq \Rbeps(R)$.
  \end{enumerate}
\end{lemma}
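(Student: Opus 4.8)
All six items are monotonicity facts that follow directly from the definitions, so the plan is: derive (i)--(iii) from the nested construction of the confidence sets $C_t$, derive (iv) by induction on $t$ feeding in (i), and derive (v)--(vi) from the definition of the one-step reachability operator, with (vi) obtained from (v) by a further induction on the number of applications. For (i) and (ii), the key observation is that, by construction, $C_t^i(\*x) = C_{t-1}^i(\*x) \cap Q_t^i(\*x) \subseteq C_{t-1}^i(\*x)$, so the family $\{C_t^i(\*x)\}_t$ is nested and decreasing in $t$ for every $\*x$ and every safety/utility index. Since $\ell_t(\*x)$ and $u_t(\*x)$ are defined as the left and right endpoints of $C_t(\*x)$, a nested decreasing family of intervals has nondecreasing left endpoints and nonincreasing right endpoints, giving $\ell_{t+1}(\*x) \geq \ell_t(\*x)$ and $u_{t+1}(\*x) \leq u_t(\*x)$. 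Item (iii) is then immediate from (i) and (ii): $w_{t+1}(\*x) = u_{t+1}(\*x) - \ell_{t+1}(\*x) \leq u_t(\*x) - \ell_t(\*x) = w_t(\*x)$.

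For (iv), I would show $S_{t+1} \supseteq S_t$ by induction on $t$ and then chain back to $S_0$. For the base case $S_1 \supseteq S_0$: for any $\*x' \in S_0$ the initialization gives $C_0^i(\*x') = [h_i,\infty)$, hence $\ell_1^i(\*x') = h_i$; taking $\*x = \*x'$ in the definition $S_1 = \bigcap_i \bigcup_{\*x \in S_0}\{\*x' \in D : \ell_1^i(\*x) - L_i d(\*x,\*x') \geq h_i\}$ (so $d(\*x,\*x') = 0$) shows $\ell_1^i(\*x') - L_i\cdot 0 = h_i \geq h_i$ for every $i$, i.e. $\*x' \in S_1$. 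For the inductive step, assume $S_t \supseteq S_{t-1}$; the set $S_{t+1} = \bigcap_i \bigcup_{\*x \in S_t}\{\*x' \in D : \ell_{t+1}^i(\*x) - L_i d(\*x,\*x') \geq h_i\}$ relaxes the definition of $S_t$ in two monotone ways: the inner union now ranges over the larger set $S_t \supseteq S_{t-1}$, and each membership test is weakened because $\ell_{t+1}^i(\*x) \geq \ell_t^i(\*x)$ by (i). Hence $S_{t+1} \supseteq S_t$, and composing with the base case yields $S_t \supseteq S_0$ for all $t \geq 1$.

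For (v), assume $S \subseteq R$. Then $S \subseteq R \subseteq \Reps(R)$ since $\Reps$ is extensive, and if $\*x$ belongs to the second term of $\Reps(S)$ -- i.e. for every $i$ there is $\*x' \in S$ with $g_i(\*x') - \epsilon - L_i d(\*x',\*x) \geq h_i$ -- then each such witness $\*x'$ also lies in $R$, so $\*x$ belongs to the corresponding term of $\Reps(R)$; therefore $\Reps(S) \subseteq \Reps(R)$. Item (vi) follows by iterating (v): since $\Rbeps(S)$ is the closure of $S$ under the one-step operator, $\Rbeps(S) = \bigcup_{n \geq 0}\Reps^n(S)$ (a limit that stabilizes as $D$ is finite), a straightforward induction on $n$ using (v) gives $\Reps^n(S) \subseteq \Reps^n(R)$ for every $n$, and taking the union over $n$ gives $\Rbeps(S) \subseteq \Rbeps(R)$. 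I do not anticipate a genuine obstacle in this lemma; the only points that need care are getting the base case of the induction in (iv) right from the initialization of $C_0^i$ on the seed set $S_0$, and stating the "monotone in two ways" step of the inductive case precisely so that the use of (i) is transparent.
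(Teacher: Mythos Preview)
Your proposal is correct and follows essentially the same approach as the paper: (i)--(iii) from the nested intersection $C_t \subseteq C_{t-1}$, (iv) by induction on $t$ using the monotonicity of $\ell_t$ and of the indexing set of the union, and (v)--(vi) straight from the definition of $\Reps$, with (vi) obtained by iterating (v). The paper is terser---it dispatches (i)--(iii) in one line and (vi) as ``repeatedly apply (v)''---and in its induction step for (iv) it cites part (ii) where you (correctly) invoke part (i); otherwise the arguments coincide.
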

\begin{proof}
  (i), (ii), and (iii) follow directly from their definitions and the definition of $C_t(\*x)$.
  \begin{enumerate}[label=(\roman*)]
    \setcounter{enumi}{3}
    \item Proof by induction.
      For the base case, let $\*x \in S_0$.
      Then,
        \begin{align*}
          \ell_1(\*x) - L d(\*x, \*x) = \ell_1(\*x) \geq \ell_0(\*x) \geq h,
        \end{align*}
      where the last inequality follows from the initialization in Line~2 of Algorithm~\ref{alg:safe}.
      But then, from the above equation and Line~10 of Algorithm~\ref{alg:safe}, it follows that $\*x \in S_1$.
      
      For the induction step, assume that for some $t \geq 2$, $S_{t-1} \subseteq S_t$ and let $\*x \in S_t$.
      By Line~10 of Algorithm~\ref{alg:safe}, this means that $\exists \*z \in S_{t-1}, \ell_t(\*z) - L d(\*z, \*x) \geq h$.
      But, since $S_{t-1} \subseteq S_t$, it means that $\*z \in S_t$.
      Furthermore, by part (ii), $\ell_{t+1}(\*z) \geq \ell_t(\*z)$.
      Therefore, we conclude that $\ell_{t+1}(\*z) - L d(\*z, \*x) \geq h$, which implies that $\*x \in S_{t+1}$.
      \item Let $\*x \in \Reps(S)$. Then, by definition, $\exists \*z \in S, f(\*z) - L d(\*z, \*x) \geq h$.
        But, since $S \subseteq R$, it means that $\*z \in R$, and, therefore, $f(\*z) - L d(\*z, \*x) \geq h$ also implies that $\*x \in \Reps(R)$.
      \item This follows directly by repeatedly applying the result of part (v).
  \end{enumerate}
\end{proof}

\begin{lemma}
\label{lem:rkhs}
  Assume that $\|f\|^2_k \leq B$, $n_t$ is $\sigma$-sub-Gaussian, $\forall t\geq 1$.
  If $\beta_t = B + \sigma \sqrt{2(\gamma_{t-1} + 1 +\log(1/\delta))}$, then the following holds with probability at least $1-\delta$:
  \begin{align*}
  \forall t \geq 1\,\forall \*x \in D,\ |f(\*x) - \mu_{t-1}(\*x)| \leq \beta_t\sigma_{t-1}(\*x).
  \end{align*}
\end{lemma}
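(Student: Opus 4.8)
The plan is to obtain this bound directly from the self-normalized concentration analysis for kernelized bandits developed in \citet{chowdhury2017kernelized} (their Theorem~2); in fact the present lemma is the \emph{unclipped} version of the confidence statement, and Lemma~\ref{lem:confidence} follows from it by intersecting the intervals $[\mu_{t-1}(\*x)\pm\beta_t\sigma_{t-1}(\*x)]$ across $t$ with one another and with $[h,\infty)$.

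The argument has three steps. \textbf{Step 1 (error decomposition).} Write the posterior error $f(\*x)-\mu_{t-1}(\*x)$ in terms of the feature map $\phi$ associated with $k$: it splits into a \emph{bias} term, proportional to $\langle\phi(\*x),(\*\Phi_{t-1}^\top\*\Phi_{t-1}+\sigma^2 I)^{-1}f\rangle$, and a \emph{noise} term involving $\langle\phi(\*x),(\*\Phi_{t-1}^\top\*\Phi_{t-1}+\sigma^2 I)^{-1}\*\Phi_{t-1}^\top\*n_{t-1}\rangle$, where $\*\Phi_{t-1}$ stacks the feature vectors of $\*x_1,\dots,\*x_{t-1}$ and $\*n_{t-1}$ is the noise vector. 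A Cauchy--Schwarz step in the norm induced by $\*\Phi_{t-1}^\top\*\Phi_{t-1}+\sigma^2 I$ factors each term as $\sigma_{t-1}(\*x)$ times a quantity that does not depend on $\*x$: the bias term is bounded by $B\,\sigma_{t-1}(\*x)$ via the RKHS norm assumption on $f$, and the noise term is $\sigma_{t-1}(\*x)$ times a self-normalized martingale norm of $\*\Phi_{t-1}^\top\*n_{t-1}$.

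\textbf{Step 2 (uniform self-normalized concentration).} Bound that self-normalized noise norm uniformly over all $t\ge 1$. This is the technically delicate part: one invokes the method-of-mixtures / pseudo-maximization argument (Abbasi-Yadkori et al., as lifted to the RKHS setting in \citet{chowdhury2017kernelized}) to show that, with probability at least $1-\delta$, for every $t$ the noise norm is at most $\sigma\sqrt{2\big(\tfrac12\log\det(I+\sigma^{-2}\*K_{t-1})+\log(1/\delta)\big)}$ up to the constants fixed in that theorem. \textbf{Step 3 (information-gain bound).} Replace $\tfrac12\log\det(I+\sigma^{-2}\*K_{t-1})=I(f;\*y_{t-1})\le\gamma_{t-1}$, which turns the bound into $\beta_t=B+\sigma\sqrt{2(\gamma_{t-1}+1+\log(1/\delta))}$ and completes the proof.

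The main obstacle is Step~2 — the uniform-in-$t$ self-normalized concentration inequality in a possibly infinite-dimensional feature space — but since it is proved in full generality in \citet{chowdhury2017kernelized}, the lemma here is really a corollary: Steps~1 and~3 are routine, and it suffices to cite their Theorem~2 and substitute the information-gain bound $\gamma_{t-1}$. Accordingly I would present the proof as a short invocation of that result, noting explicitly that this unclipped bound is what underlies the set-valued confidence intervals $C_t$ used throughout the rest of the analysis.
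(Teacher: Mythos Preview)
Your proposal is correct and matches the paper's approach exactly: the paper's proof of this lemma is simply ``See Theorem~2 by \citet{chowdhury2017kernelized},'' which is precisely the citation you arrive at after unpacking the three steps. Your additional exposition of the error decomposition, self-normalized concentration, and information-gain substitution is accurate and more informative than the paper's one-line proof, but the underlying argument is identical.
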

\begin{proof}
  See Theorem 2 by \citet{chowdhury2017kernelized}.
\end{proof}

\begin{cor}
  \label{cor:rkhs}
  For $\beta_t$ as above, the following holds with probability at least $1-\delta$:
  \begin{align*}
  \forall t \geq 1\,\forall \*x \in D,\ f(\*x) \in C_t(\*x).
  \end{align*}
  where $C_t(\*x)$ is the confidence interval at $\*x$ at $t$ iteration.
\end{cor}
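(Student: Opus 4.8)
The plan is to derive the corollary directly from Lemma~\ref{lem:rkhs} together with the recursive definition of the confidence intervals $C_t$. The key structural observation is that $C_t(\*x)$ is, by construction (Line~7 of Algorithm~\ref{alg:safe}), the running intersection $C_t(\*x) = C_0(\*x) \cap \bigcap_{s=1}^{t} Q_s(\*x)$, where $Q_s(\*x) = [\mu_{s-1}(\*x) - \beta_s\sigma_{s-1}(\*x),\ \mu_{s-1}(\*x) + \beta_s\sigma_{s-1}(\*x)]$. Hence it suffices to show that $f(\*x)$ lies in every factor of this intersection, simultaneously for all $t$, on a single high-probability event.

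First I would invoke Lemma~\ref{lem:rkhs} to obtain that, with probability at least $1-\delta$, the bound $|f(\*x) - \mu_{t-1}(\*x)| \leq \beta_t\sigma_{t-1}(\*x)$ holds jointly for all $t \geq 1$ and all $\*x \in D$; equivalently, $f(\*x) \in Q_t(\*x)$ for all such $t,\*x$. Crucially this is one event of probability $\geq 1-\delta$, with the uniformity over $t$ already built into Lemma~\ref{lem:rkhs} (via Theorem~2 of \citet{chowdhury2017kernelized}), so no additional union bound over iterations is needed, and intersecting the $Q_t$'s into $C_t$ does not degrade the failure probability.

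Then I would argue by induction on $t$ that $f(\*x) \in C_t(\*x)$ for all $\*x \in D$ on this event. For the base case $t=0$: if $\*x \in D \setminus S_0$ then $C_0(\*x) = \mathbb{R} \ni f(\*x)$ trivially, and if $\*x \in S_0$ then $C_0(\*x) = [h,\infty)$, so membership follows from the standing safe-seed assumption $f(\*x) \geq h$ (playing the role of $g_i(\*x) \geq h_i$ on $S_0$). For the inductive step, $C_t(\*x) = C_{t-1}(\*x) \cap Q_t(\*x)$; the induction hypothesis gives $f(\*x) \in C_{t-1}(\*x)$ and the event above gives $f(\*x) \in Q_t(\*x)$, whence $f(\*x) \in C_t(\*x)$.

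There is no genuinely hard step here; the corollary is essentially a bookkeeping consequence of Lemma~\ref{lem:rkhs} (indeed it restates Lemma~\ref{lem:confidence}). The only point I would take care to state explicitly is that the guarantee must hold on a \emph{single} event valid for all $t$, so that the $C_t$ are sound throughout the entire run of the algorithm — this is precisely what the uniform-in-$t$ form of Lemma~\ref{lem:rkhs} delivers, and it is what makes the subsequent correctness arguments for $S_t$ and $G_t$ go through.
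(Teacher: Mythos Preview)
Your proposal is correct and matches the paper's approach: the corollary is stated without proof as an immediate consequence of Lemma~\ref{lem:rkhs} (and is a restatement of Lemma~\ref{lem:confidence}, which the paper proves by a one-line citation to Theorem~2 of \citet{chowdhury2017kernelized}). Your induction over the recursive definition of $C_t$ together with the safe-seed assumption for the base case on $S_0$ simply makes explicit the bookkeeping the paper leaves implicit.
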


In the following lemmas, we implicitly assume that the assumptions of Lemma~\ref{lem:rkhs} hold, and that $\beta_t$ is defined as above.

\begin{lemma}
  \label{lem:gtmt}
  For any $t_1 \geq t_0 \geq 1$, if $S_{t_1} = S_{t_0}$, then, for any $t$, such that $t_0 \leq t < t_1$, it holds that
  \begin{align*}
    G_{t+1} \subseteq G_t.
  \end{align*}
\end{lemma}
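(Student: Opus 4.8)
The plan is to first observe that the hypothesis $S_{t_1} = S_{t_0}$ forces the safe sets to be constant over the whole window $\{t_0, \dots, t_1\}$, and then to show that once the safe set is frozen, the expander set can only shrink from one iteration to the next, because the confidence intervals $C_t$ only tighten.

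First I would fix $t$ with $t_0 \le t < t_1$ and establish $S_t = S_{t+1}$. By Lemma~\ref{lem:basics}(iv) the sequence $(S_s)_s$ is nondecreasing, so, using $t_0 \le t$ and $t+1 \le t_1$, we get $S_{t_0} \subseteq S_t \subseteq S_{t+1} \subseteq S_{t_1}$. Since the two extreme sets coincide by hypothesis, all of these inclusions are equalities; in particular $S_t = S_{t+1} =: S$.

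Next I would take an arbitrary $\*x \in G_{t+1}$ and show $\*x \in G_t$. By definition $\*x \in S_{t+1} = S \subseteq S_t$, so it only remains to verify $e_t(\*x) > 0$. Since $\*x \in G_{t+1}$ we have $e_{t+1}(\*x) > 0$, so there is some $\*x' \in D \setminus S_{t+1}$ with $u_{t+1}^i(\*x) - L_i d(\*x, \*x') \ge h_i$ for every $i$. Because $S_{t+1} = S_t$, this same $\*x'$ lies in $D \setminus S_t$; and by Lemma~\ref{lem:basics}(ii), applied to each safety function, $u_t^i(\*x) \ge u_{t+1}^i(\*x)$, hence $u_t^i(\*x) - L_i d(\*x, \*x') \ge h_i$ for every $i$. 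Thus $\*x'$ witnesses $e_t(\*x) > 0$, i.e. $\*x \in G_t$, which completes the argument.

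The only care needed is bookkeeping: checking the window endpoints so that both $S_t$ and $S_{t+1}$ equal $S_{t_0}$, and reading the monotonicity statement of Lemma~\ref{lem:basics}(ii) as applying to each $g_i$ separately (per the note at the start of the appendix). There is no genuine analytic obstacle — the substantive content is simply that freezing the safe region turns $(G_t)_t$ into a nonincreasing sequence of sets, which is immediate once the intervals $C_t$ are known to be nested.
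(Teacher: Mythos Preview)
Your proof is correct and follows essentially the same approach as the paper's: use monotonicity of $(S_s)_s$ to conclude $S_t = S_{t+1}$ throughout the window, and then use that $u_t^i$ is nonincreasing in $t$ (Lemma~\ref{lem:basics}(ii)) to deduce $e_t(\*x) \ge e_{t+1}(\*x)$ for $\*x \in S_t$, hence $G_{t+1} \subseteq G_t$. The paper's proof is a two-line sketch of exactly this; you have simply spelled out the witness argument in full.
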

\begin{proof}
  Given the assumption that $S_t$ does not change, $G_{t+1} \subseteq G_t$ follows directly from the definitions of $G_t$.
  In particular, for $G_t$, note that for any $\*x \in S_t$, $g_t(\*x)$ is decreasing in $t$, since $u_t(\*x)$ is decreasing in $t$.
\end{proof}

\begin{lemma}
  \label{lem:wt}
  For any $t_1 \geq t_0 \geq 1$, if $S_{t_1} = S_{t_0}$ and $C_1 \defeq 8 / \log(1 + \sigma^{-2})$, then, for any $t$, such that $t_0 \leq t \leq t_1$, it holds that
    \begin{align*}
      w_t(\*x_t) \leq \sqrt{\frac{C_1 \beta_t^2 \gamma_t}{t-t_0}}.
    \end{align*}
\end{lemma}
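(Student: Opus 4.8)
The plan is to show that, within a stage in which the safe set is frozen, the sequence of selected posterior widths $w_s(\*x_s)$ is non-increasing, turn this into the pointwise inequality $(t-t_0)\,w_t(\*x_t)^2 \le \sum_{s\le t} w_s(\*x_s)^2$, and then control the right-hand side with the standard information-gain sum bound.

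First I would establish the monotonicity of $s \mapsto w_s(\*x_s)$ on the stage $[t_0,t_1]$. By Lemma~\ref{lem:basics}(iv) the hypothesis $S_{t_1}=S_{t_0}$ forces $S_{t_0}=S_{t_0+1}=\cdots=S_{t_1}$, so Lemma~\ref{lem:gtmt} gives $G_{s+1}\subseteq G_s$ for $t_0\le s<t_1$. Combining this with Lemma~\ref{lem:basics}(iii) (pointwise $w_{s+1}(\*x)\le w_s(\*x)$) and the expansion-stage rule $\*x_s=\argmax_{\*x\in G_s} w_s(\*x)$, I obtain for $t_0\le s<t_1$
\begin{align*}
w_{s+1}(\*x_{s+1}) \le w_s(\*x_{s+1}) \le \max_{\*x\in G_s} w_s(\*x) = w_s(\*x_s),
\end{align*}
the middle inequality using $\*x_{s+1}\in G_{s+1}\subseteq G_s$. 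Hence $w_s(\*x_s)$ is non-increasing on $\{t_0,\dots,t_1\}$, so for any fixed $t$ with $t_0\le t\le t_1$ we get $(t-t_0)\,w_t(\*x_t)^2 \le \sum_{s=t_0+1}^{t} w_s(\*x_s)^2 \le \sum_{s=1}^{t} w_s(\*x_s)^2$ (the statement being vacuous when $t=t_0$).

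Next I would bound $\sum_{s=1}^t w_s(\*x_s)^2$. By construction $C_s(\*x)\subseteq Q_s(\*x)=[\mu_{s-1}(\*x)\pm\beta_s\sigma_{s-1}(\*x)]$, so on the high-probability event of Lemma~\ref{lem:rkhs} (where $C_s(\*x)$ is nonempty and $w_s(\*x)\ge 0$) we have $w_s(\*x)\le 2\beta_s\sigma_{s-1}(\*x)$; since $\beta_s$ is non-decreasing in $s$ (because $\gamma_{s-1}$ is), this yields $\sum_{s=1}^t w_s(\*x_s)^2 \le 4\beta_t^2\sum_{s=1}^t\sigma_{s-1}^2(\*x_s)$. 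I then invoke the standard bound of \citet{srinivas10}: using $k(\*x,\*x)\le 1$ and the fact that $x\le \frac{\sigma^{-2}}{\log(1+\sigma^{-2})}\log(1+x)$ for $x=\sigma^{-2}\sigma_{s-1}^2(\*x_s)\in[0,\sigma^{-2}]$, together with $\tfrac12\sum_{s=1}^t\log(1+\sigma^{-2}\sigma_{s-1}^2(\*x_s))=I(f;\*y_{1:t})\le\gamma_t$, one gets $\sum_{s=1}^t\sigma_{s-1}^2(\*x_s)\le \frac{2}{\log(1+\sigma^{-2})}\gamma_t$. Chaining everything: $(t-t_0)\,w_t(\*x_t)^2 \le 4\beta_t^2\cdot\frac{2\gamma_t}{\log(1+\sigma^{-2})}=C_1\beta_t^2\gamma_t$, which is the claim.

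The argument is largely bookkeeping; the one genuinely load-bearing step is the within-stage monotonicity, as it is the only place that uses the frozen-$S$ hypothesis (through Lemma~\ref{lem:gtmt}) and the specific $\argmax$ selection rule — without $G_{s+1}\subseteq G_s$ the selected widths need not decrease along the stage. A minor subtlety is the off-by-one in the $\beta$/$\sigma$ indexing, and the fact that with $n$ safety functions the information-gain sum for each fixed $g_i$ is still at most $\gamma_t$ (since $g_i$ is observed at every sampled $\*x_s$ regardless of which function's width triggered the selection), so the single-function statement holds as written; the extra factor of $n$ only surfaces when this lemma is later aggregated over the $n$ safety functions in the proof of Theorem~\ref{thm:safe}.
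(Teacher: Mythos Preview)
Your proposal is correct and follows essentially the same approach the paper intends: the paper's proof is only a two-line sketch invoking Lemma~\ref{lem:gtmt}, the $\argmax$ selection rule, the bound $w_t(\*x_t)\le 2\beta_t\sigma_{t-1}(\*x_t)$, and the information-gain sum bound of \citet{chowdhury2017kernelized}, and you have spelled out exactly those ingredients in the right order (within-stage monotonicity of $w_s(\*x_s)$, then $(t-t_0)w_t(\*x_t)^2\le\sum_s w_s(\*x_s)^2\le 4\beta_t^2\sum_s\sigma_{s-1}^2(\*x_s)\le C_1\beta_t^2\gamma_t$). Your handling of the off-by-one and the $n$-function remark are also consistent with how the paper treats them.
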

\begin{proof}
Based on the results of Lemma~\ref{lem:gtmt}, the definition of $\*x_t \defeq \argmax_{\*x \in G_t}(w_t(\*x))$, and the fact that, by definition, $w_t(\*x_t) \leq 2\beta_t\sigma_{t-1}(\*x_t)$, the proof is a straight forward analogy of Lemma 4 by \citet{chowdhury2017kernelized}.
$\gamma_t$ will be replaced by $\gamma_{nt}$ for multiple safety functions where $n$ is the number of safety functions. This directly follows the Theorem~1 in \citet{berkenkamp16bayesian}
\end{proof}

\begin{cor}
  \label{cor:wt}
  For any $t \geq 1$, if $C_1$ is defined as above, $\T$ is the smallest positive integer satisfying $\displaystyle\frac{\T}{\beta_{t+\T}^2 \gamma_{t+\T}} \geq \frac{C_1}{\epsilon^2}$, and $S_{t+\T} = S_t$, then, for any $\*x \in G_{t+\T}$, it holds that
  \begin{align*}
    w_{t+\T}(\*x) \leq \epsilon.
  \end{align*}
\end{cor}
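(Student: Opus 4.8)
The plan is to obtain this as an immediate consequence of Lemma~\ref{lem:wt}, instantiating its two time indices at the endpoints of the window $[t,\,t+\T]$ and then upgrading the bound from the sampled point $\*x_{t+\T}$ to every point of $G_{t+\T}$ via the $\argmax$ rule that defines the expansion-stage selection.

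First I would apply Lemma~\ref{lem:wt} with $t_0 = t$ and $t_1 = t+\T$. The hypothesis $S_{t+\T} = S_t$, combined with the monotonicity $S_t \subseteq S_{t+1} \subseteq \cdots \subseteq S_{t+\T}$ from Lemma~\ref{lem:basics}(iv), forces $S_{t'} = S_t$ for \emph{every} $t \le t' \le t+\T$; in particular $S_{t_1} = S_{t_0}$, so the lemma is applicable on this whole interval. Evaluating its conclusion at the right endpoint gives
\[
  w_{t+\T}(\*x_{t+\T}) \;\le\; \sqrt{\frac{C_1\,\beta_{t+\T}^2\,\gamma_{t+\T}}{(t+\T)-t}} \;=\; \sqrt{\frac{C_1\,\beta_{t+\T}^2\,\gamma_{t+\T}}{\T}}.
\]
By the defining property of $\T$, namely $\T / (\beta_{t+\T}^2\gamma_{t+\T}) \ge C_1/\epsilon^2$, the radicand is at most $\epsilon^2$, so $w_{t+\T}(\*x_{t+\T}) \le \epsilon$. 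Note that $\beta$ and $\gamma$ enter only through $\beta_{t+\T}$, $\gamma_{t+\T}$, which are exactly the quantities appearing in the statement, so no monotonicity of $\beta_t$ or $\gamma_t$ is required. To finish, I would invoke the selection rule $\*x_{t+\T} = \argmax_{\*x \in G_{t+\T}} w_{t+\T}(\*x)$ (the maximum of the width over the current expander set, across safety functions): for any $\*x \in G_{t+\T}$ this yields $w_{t+\T}(\*x) \le w_{t+\T}(\*x_{t+\T}) \le \epsilon$.

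There is no real obstacle here; the only point that needs care is that Lemma~\ref{lem:wt} requires $S_{t'}$ to be constant over the \emph{entire} window $[t,\,t+\T]$, not merely at the two endpoints — which is precisely why the monotonicity of $S_t$ in Lemma~\ref{lem:basics}(iv) must be cited. As usual, all of this holds on the probability-$(1-\delta)$ event of Lemma~\ref{lem:rkhs} / Corollary~\ref{cor:rkhs}, which is inherited through Lemma~\ref{lem:wt}.
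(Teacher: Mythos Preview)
Your proposal is correct and matches the paper's intended approach: the corollary is stated without proof in the paper, as it is an immediate consequence of Lemma~\ref{lem:wt} instantiated at $t_0=t$, $t_1=t+\T$, combined with the defining inequality for $\T$ and the $\argmax$ selection rule over $G_{t+\T}$. Your remark about monotonicity of $S_t$ is harmless but slightly redundant, since Lemma~\ref{lem:wt} already only hypothesizes $S_{t_1}=S_{t_0}$ (the constancy over the whole window is absorbed into its proof via Lemma~\ref{lem:gtmt}).
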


In the following lemmas, we assume that $C_1$ and $\T$ are defined as above.

\begin{lemma}
  \label{lem:expansion0}
  For any $t \geq 1$, if $\Rbeps(S_0) \setminus S_t \not= \varnothing$, then $\Reps(S_t) \setminus S_t \not= \varnothing$.
\end{lemma}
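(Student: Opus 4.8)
The plan is to argue by contradiction, treating this as a purely set-theoretic fact about the reachability operator $\Reps$ and the algorithm's safe sets; essentially no confidence-bound machinery is needed here, only Lemma~\ref{lem:basics}(iv) ($S_0 \subseteq S_t$) and the monotonicity of $\Reps$ from Lemma~\ref{lem:basics}(v).

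First I would recall the relevant structure of $\Reps$. Since $D$ is finite and $S \subseteq \Reps(S)$ for every $S$, the sets $\Reps^n(S_0)$ form an increasing chain that stabilizes, so $\Rbeps(S_0) = \bigcup_{n \geq 0} \Reps^n(S_0)$ is the smallest fixed point of $\Reps$ containing $S_0$; in particular $\Reps(\Rbeps(S_0)) = \Rbeps(S_0)$. This is exactly the definition of the closure operator $\Rbeps$.

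Now suppose, for contradiction, that $\Reps(S_t) \setminus S_t = \varnothing$. Since $\Reps(S_t) \supseteq S_t$ always holds, this forces $\Reps(S_t) = S_t$, i.e.\ $S_t$ is itself a fixed point of $\Reps$; a trivial induction then gives $\Reps^n(S_t) = S_t$ for all $n \geq 0$. Combining $S_0 \subseteq S_t$ with the monotonicity of $\Reps$, another induction gives $\Reps^n(S_0) \subseteq \Reps^n(S_t) = S_t$ for every $n$. Taking the union over $n$ yields $\Rbeps(S_0) = \bigcup_n \Reps^n(S_0) \subseteq S_t$, i.e.\ $\Rbeps(S_0) \setminus S_t = \varnothing$, contradicting the hypothesis. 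Hence $\Reps(S_t) \setminus S_t \neq \varnothing$.

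I do not expect a genuine obstacle: the only point requiring care is pinning down $\Rbeps(S_0)$ as the stabilized union / least fixed point of $\Reps$ above $S_0$, so that the step ``$S_t$ fixed point $\Rightarrow S_t \supseteq \Rbeps(S_0)$'' is rigorous. If a direct (non-contradiction) argument is preferred, the same ingredients suffice: pick the least $n \geq 1$ with $\Reps^n(S_0) \not\subseteq S_t$ (this exists since $\Rbeps(S_0) \not\subseteq S_t$, and is $\geq 1$ since $\Reps^0(S_0) = S_0 \subseteq S_t$), take $\*x \in \Reps^n(S_0) \setminus S_t$, and observe that $\*x \in \Reps(\Reps^{n-1}(S_0)) \subseteq \Reps(S_t)$ by monotonicity because $\Reps^{n-1}(S_0) \subseteq S_t$; thus $\*x \in \Reps(S_t) \setminus S_t$.
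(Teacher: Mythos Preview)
Your proposal is correct and follows essentially the same contradiction argument as the paper: assume $\Reps(S_t)=S_t$, iterate to conclude $\Rbeps(S_t)=S_t$, and then use $S_0\subseteq S_t$ together with monotonicity to get $\Rbeps(S_0)\subseteq S_t$. The only cosmetic difference is that the paper invokes the already-packaged monotonicity of the closure $\Rbeps$ (Lemma~\ref{lem:basics}(vi)) directly, whereas you unfold it as $\Reps^n(S_0)\subseteq\Reps^n(S_t)=S_t$ for each $n$; the content is identical.
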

\begin{proof}
  Assume, to the contrary, that $\Reps(S_t) \setminus S_t = \varnothing$.
  By definition, $\Reps(S_t) \supseteq S_t$, therefore $\Reps(S_t) = S_t$.
  Iteratively applying $\Reps$ to both sides, we get in the limit $\Rbeps(S_t) = S_t$.
  But then, by Lemma~\ref{lem:basics} (iv) and (vi), we get
  \begin{align}
    \label{eq:expansion0}
    \Rbeps(S_0) \subseteq \Rbeps(S_t) = S_t,
  \end{align}
  which contradicts the lemma's assumption that $\Rbeps(S_0) \setminus S_t \not= \varnothing$.
\end{proof}

\begin{lemma}
  \label{lem:expansion}
  Within the expansion stage, for any $t \geq 1$, if $\Rbeps(S_0) \setminus S_t \not= \varnothing$, then the following holds with probability at least $1-\delta$:
  \begin{align*}
    S_{t+\T} \supsetneq S_t.
  \end{align*}
\end{lemma}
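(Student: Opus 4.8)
The plan is to argue by contradiction. Suppose instead that $S_{t+\T} = S_t$. Since $(S_{t'})$ is nondecreasing by Lemma~\ref{lem:basics}(iv), this forces $S_{t'} = S_t$ for every $t \le t' \le t+\T$, so in particular the hypothesis of Corollary~\ref{cor:wt} is met on this window. Everything below is conditioned on the probability-$(1-\delta)$ event of Corollary~\ref{cor:rkhs} that $g_i(\*x) \in C_{t'}^i(\*x)$ for all $i$, all $t'$, and all $\*x$; this is the only source of randomness, so the conclusion will hold with probability at least $1-\delta$.

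First I would produce a witness point to expand into. Since $\Rbeps(S_0) \setminus S_t \ne \varnothing$, Lemma~\ref{lem:expansion0} gives $\Reps(S_t) \setminus S_t \ne \varnothing$; fix some $\*x'$ in this set. By the definition of $\Reps$ there is a $\*z \in S_t$ with $g_i(\*z) - \epsilon - L_i d(\*z,\*x') \ge h_i$ for all $i$. Using the confidence event, $u_{t+\T}^i(\*z) \ge g_i(\*z)$, and since $\*x' \notin S_t = S_{t+\T}$ one gets $u_{t+\T}^i(\*z) - L_i d(\*z,\*x') \ge h_i + \epsilon > h_i$ for all $i$; hence $\*x'$ belongs to the set defining $e_{t+\T}(\*z)$, so $e_{t+\T}(\*z) > 0$, and since $\*z \in S_t = S_{t+\T}$ we conclude $\*z \in G_{t+\T}$.

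Next I would close the loop using the shrinking-width bound. Because $S_{t+\T} = S_t$, Corollary~\ref{cor:wt} applies and yields $w_{t+\T}^i(\*z) \le \epsilon$ for all $i$. Combined with $g_i(\*z) \le u_{t+\T}^i(\*z)$ this gives $\ell_{t+\T}^i(\*z) = u_{t+\T}^i(\*z) - w_{t+\T}^i(\*z) \ge g_i(\*z) - \epsilon \ge h_i + L_i d(\*z,\*x')$, i.e.\ $\ell_{t+\T}^i(\*z) - L_i d(\*z,\*x') \ge h_i$ for every $i$. Since $\*z \in S_{t+\T-1} = S_t$, the definition of $S_{t+\T}$ then certifies $\*x' \in S_{t+\T}$, contradicting $\*x' \notin S_t = S_{t+\T}$. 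Hence $S_{t+\T} \ne S_t$, and monotonicity upgrades this to $S_{t+\T} \supsetneq S_t$. (For $n$ safety functions one replaces $\gamma_{\,\cdot}$ by $\gamma_{n\,\cdot}$ throughout, as in Lemma~\ref{lem:wt}.)

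The delicate point --- and the one I expect to be the main obstacle --- is that the single point $\*z$ must play two roles at once: its \emph{optimistic} upper bound $u_{t+\T}^i(\*z)$ must be large enough to certify $\*z$ as an expander (so that $\*z \in G_{t+\T}$ and Corollary~\ref{cor:wt} is even applicable to it), while its \emph{pessimistic} lower bound $\ell_{t+\T}^i(\*z)$ must simultaneously be large enough --- once the width has been squeezed below $\epsilon$ --- to certify $\*x'$ as safe. Threading this requires the contradiction hypothesis $S_{t+\T}=S_t$ together with the monotonicity of $(S_{t'})$ to guarantee that $\*z$ stays inside, and $\*x'$ stays outside, the safe set over the whole window $[t,t+\T]$; that is precisely the structure the lemma exploits.
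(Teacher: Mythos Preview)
Your proof is correct and follows essentially the same contradiction argument as the paper: assume $S_{t+\T}=S_t$, use Lemma~\ref{lem:expansion0} to obtain a one-step-reachable point $\*x'$ with witness $\*z\in S_t$, show $\*z\in G_{t+\T}$ via the upper bound, then invoke Corollary~\ref{cor:wt} to squeeze $w_{t+\T}(\*z)\le\epsilon$ and certify $\*x'\in S_{t+\T}$ via the lower bound. Your write-up is in fact slightly more careful than the paper's (explicit conditioning on the $1-\delta$ event, explicit use of monotonicity to get $S_{t+\T-1}=S_t$); the one caveat is that the definition of $\Reps$ only guarantees, for each $i$, \emph{some} witness $\*z_i\in S_t$, not a common $\*z$ working for all $i$ simultaneously --- the paper's own proof sidesteps this by treating a single safety function and asserting the lemmas hold ``for all $f_i$'s''.
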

\begin{proof}
  By Lemma~\ref{lem:expansion0}, we get that, $\Reps(S_t) \setminus S_t \not= \varnothing$, For equivalently, by definition,
    \begin{align}
      \label{eq:expansion1}
      \exists \*x \in \Reps(S_t) \setminus S_t\,\exists \*z \in S_t, f(\*z) - \epsilon - L d(\*z, \*x) \geq h.
    \end{align}
    
  Now, assume, to the contrary, that $S_{t+\T} = S_t$ (see Lemma~\ref{lem:basics} (iv)), which implies that $\*x \in D \setminus S_{t+\T}$ and $\*z \in S_{t+\T}$.
  Then, we have
  \begin{align*}
    u_{t+\T}(\*z) - L d(\*z, \*x) &\geq f(\*z) - L d(\*z, \*x) \tag*{by Lemma~\ref{lem:rkhs}}\\
    &\geq f(\*z) - \epsilon - L d(\*z, \*x) \\
    &\geq h. \tag*{by Equation~\ref{eq:expansion1}}
  \end{align*}
  Therefore, by definition, $g_{t+\T}(\*z) > 0$, which implies $\*z \in G_{t+\T}$.
  
  Finally, since $S_{t+\T} = S_t$ and $\*z \in G_{t+\T}$, we can use Corollary~\ref{cor:wt} as follows:
  \begin{align*}
    \ell_{t+\T}(\*z) - L d(\*z, \*x) &\geq \ell_{t+\T} - f(\*z) + \epsilon + h \tag*{by Equation~\ref{eq:expansion1}}\\
    &\geq -w_{t+\T}(\*z) + \epsilon + h \tag*{by Lemma~\ref{lem:rkhs}}\\
    &\geq h \tag*{by Corollary~\ref{cor:wt}}.
  \end{align*}
  This means that by Line 10 of Algorithm~\ref{alg:safe} we get $\*x \in S_{t+\T}$, which is a contradiction.
\end{proof}

\begin{lemma}
  \label{lem:stleq}
  For any $t \geq 0$, the following holds with probability at least $1-\delta$:
  \begin{align*}
    S_t \subseteq \Rbo(S_0).
  \end{align*}
\end{lemma}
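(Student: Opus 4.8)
The plan is to show that, on the high-probability event of Lemma~\ref{lem:confidence}, the algorithm's safe-set recursion never classifies as safe a point outside the fully reachable set $\Rbo(S_0)$. First I would condition on the event that $g_i(\*x)\in C_t^i(\*x)$ for all $\*x\in D$, all iterations $t$, and all $i$ (Lemma~\ref{lem:confidence}, equivalently Corollary~\ref{cor:rkhs}); this holds with probability at least $1-\delta$, and in particular yields the one-sided bound $\ell_t^i(\*x)\le g_i(\*x)$ everywhere. All subsequent steps are argued on this event.

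Next I would induct on $t$. The base case $S_0\subseteq\Rbo(S_0)$ is immediate, since the one-step reachability operator always satisfies $\Ro(S)\supseteq S$. For the inductive step, assume $S_{t-1}\subseteq\Rbo(S_0)$ and take any $\*x'\in S_t$. By the definition of $S_t$ (Line~10 of Algorithm~\ref{alg:safe}), for each $i$ there is a witness $\*x_i\in S_{t-1}$ with $\ell_t^i(\*x_i) - L_i d(\*x_i,\*x')\ge h_i$. Replacing $\ell_t^i(\*x_i)$ by the larger quantity $g_i(\*x_i)$ and using $\*x_i\in S_{t-1}\subseteq\Rbo(S_0)$, I obtain that for every $i$ there exists $\*x_i\in\Rbo(S_0)$ with $g_i(\*x_i)-L_i d(\*x_i,\*x')\ge h_i$; since the reachability operator takes an intersection over $i$, it is harmless that the witness depends on $i$. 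Hence $\*x'\in\Ro(\Rbo(S_0))$.

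Finally I would use that $\Rbo(S_0)$ is a fixed point of the one-step reachability operator: because there are only finitely many safety functions, $\Ro\big(\bigcup_{k\ge0}\Ro^k(S_0)\big)=\bigcup_{k\ge0}\Ro^k(S_0)=\Rbo(S_0)$. Therefore $\*x'\in\Rbo(S_0)$, which establishes $S_t\subseteq\Rbo(S_0)$ and closes the induction. The only point requiring care is the bookkeeping of the per-$i$ quantifier when passing from the algorithmic recursion to the reachability operator (matching the $\bigcap_i$ in the definition of $\Ro$), but this is exactly the form in which the witnesses are produced, so there is no real obstacle; the substantive input is simply the confidence bound $\ell_t^i\le g_i$.
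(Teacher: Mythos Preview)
Your proof is correct and follows essentially the same induction as the paper: condition on the confidence event so that $\ell_t^i\le g_i$, then push each algorithmic witness through the reachability operator using $S_{t-1}\subseteq\Rbo(S_0)$ and the fixed-point property of $\Rbo$. The paper's appendix presents the argument for a single safety function (deferring the multi-constraint case to a blanket remark), whereas you spell out the per-$i$ witness bookkeeping and the fixed-point step explicitly; these are elaborations rather than a different route.
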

\begin{proof}
  Proof by induction. For the base case, $t = 0$, we have by definition that $S_0 \subseteq \Rbo(S_0)$.
      
  For the induction step, assume that for some $t \geq 1$, $S_{t-1} \subseteq \Rbo(S_0)$.
  Let $\*x \in S_t$, which, by definition, means $\exists \*z \in S_{t-1}$, such that
  \begin{align*}
    & \ell_t(\*z) - L d(\*z, \*x) \geq h\\
    \Rightarrow\ \ & f(\*z) - L d(\*z, \*x) \geq h. \tag*{by Lemma~\ref{lem:rkhs}}\\
  \end{align*}
  Then, by definition of $\Rbo$ and the fact that $\*z \in \Rbo(S_0)$, it follows that $\*x \in \Rbo(S_0)$.
\end{proof}

\begin{lemma}
  \label{lem:existence}
  Let $t^*$ be the smallest integer, such that $t^* \geq |\Rbo(S_0)|T_{t^*}$.
  Then, there exists $t_0 \leq t^*$, such that $S_{t_0+T_{t_0}} = S_{t_0}$.
\end{lemma}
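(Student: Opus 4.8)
This is a pigeonhole argument over the nondecreasing chain $S_0 \subseteq S_1 \subseteq S_2 \subseteq \cdots$ (Lemma~\ref{lem:basics}(iv)), together with the counting bound built into the definition of $t^*$. I would argue by contradiction: assume that for \emph{every} $t_0$ with $0 \le t_0 \le t^*$ one has $S_{t_0 + T_{t_0}} \ne S_{t_0}$, and hence $S_{t_0 + T_{t_0}} \supsetneq S_{t_0}$, so $|S_{t_0 + T_{t_0}}| \ge |S_{t_0}| + 1$. The goal is to show this forces the safe sets to outgrow the bound $|S_t| \le |\Rbo(S_0)|$ of Lemma~\ref{lem:stleq}, a contradiction.

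The mechanism is to ``chase plateaus'': define $\tau_0 \defeq 0$ and $\tau_{j+1} \defeq \tau_j + T_{\tau_j}$, which is strictly increasing since each $T_{\tau_j} \ge 1$. A one-line induction using the contradiction hypothesis at $t_0 = \tau_{k-1}$ shows that, as long as $\tau_{k-1} \le t^*$ (equivalently $\tau_0,\dots,\tau_{k-1} \le t^*$, by monotonicity of $\tau_j$), one has $|S_{\tau_k}| \ge |S_0| + k \ge 1 + k$. Setting $m \defeq |\Rbo(S_0)|$, comparing with $|S_{\tau_k}| \le m$ shows $\tau_j$ cannot stay $\le t^*$ for all $j \le m$; so there is a smallest index $j_0$ with $\tau_{j_0} > t^*$, and $1 \le j_0 \le m$ (the lower bound because $\tau_0 = 0 \le t^*$).

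To finish I would bound $\tau_{j_0}$ from above. This needs $s \mapsto T_s$ nondecreasing, which holds because $\beta_s$ and $\gamma_s$ are nondecreasing in $s$, so for each fixed candidate length $\tau$ the ratio $\tau / (\beta_{s+\tau}^2 \gamma_{s+\tau})$ is nonincreasing in $s$, and therefore the smallest $\tau$ meeting the defining inequality of $T_s$ can only grow with $s$. Since $\tau_0,\dots,\tau_{j_0-1}$ are all $\le t^*$, each satisfies $T_{\tau_j} \le T_{t^*}$, whence $\tau_{j_0} = \sum_{j=0}^{j_0-1} T_{\tau_j} \le j_0\, T_{t^*} \le m\, T_{t^*}$. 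But $t^* \ge m\, T_{t^*}$ by definition of $t^*$, so $\tau_{j_0} \le t^*$, contradicting $\tau_{j_0} > t^*$. Hence the contradiction hypothesis fails, giving the desired $t_0 \le t^*$ with $S_{t_0 + T_{t_0}} = S_{t_0}$. I expect the only genuinely delicate point to be the monotonicity of $T_s$ (hence of $\beta_s,\gamma_s$), which is what makes the bound $\tau_{j_0} \le m\, T_{t^*}$ valid; the remaining bookkeeping (the induction $|S_{\tau_k}| \ge 1+k$ and the count $j_0 \le m$) is routine.
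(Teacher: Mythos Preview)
Your proposal is correct and follows essentially the same pigeonhole argument as the paper: assume strict growth everywhere, chain together steps of length $T_{\cdot}$, use monotonicity of $T_t$ to bound the total time by $|\Rbo(S_0)|\,T_{t^*}\le t^*$, and contradict Lemma~\ref{lem:stleq}. The only cosmetic difference is that you track an explicit sequence $\tau_{j+1}=\tau_j+T_{\tau_j}$, whereas the paper works on the coarser grid $kT_{t^*}$ via the chain $S_{kT_{t^*}}\subsetneq S_{kT_{t^*}+T_{kT_{t^*}}}\subseteq S_{(k+1)T_{t^*}}$; your version is slightly more careful in making the monotonicity of $T_t$ explicit, which the paper asserts without justification.
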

\begin{proof}
  Assume, to the contrary, that for any $t \leq t^*$, $S_t \subsetneq S_{t+T_t}$. (By Lemma~\ref{lem:basics} (iv), we know that $S_t \subseteq S_{t+T_t}$.)
  Since $\T$ is increasing in $t$, we have
  \begin{align*}
    S_0 \subsetneq S_{T_0} \subseteq S_{T_{t^*}} \subsetneq S_{T_{t^*}+T_{T_{t^*}}} \subseteq S_{2T_{t^*}} \subsetneq \cdots,
  \end{align*}
  which implies that, for any $0 \leq k \leq |\Rbo(S_0)|$, it holds that $|S_{kT_{t^*}}| > k$.
  In particular, for $k^* \defeq |\Rbo(S_0)|$, we get
  \begin{align*}
    |S_{k^*T}| > |\Rbo(S_0)|
  \end{align*}
  which contradicts $S_{k^*T} \subseteq \Rbo(S_0)$ by Lemma~\ref{lem:stleq}.
\end{proof}

\begin{cor}
  \label{cor:existence}
  Within the expansion stage, let $t^*$ be the smallest integer, such that $\displaystyle\frac{t^*}{\beta_{t^*}^2\gamma_{t^*}} \geq \frac{C_1|\Rbo(S_0)|}{\epsilon^2}$.
  Then, there exists $t_0 \leq t^*$, such that $S_{t_0+T_{t_0}} = S_{t_0}$.
\end{cor}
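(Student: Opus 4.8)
The plan is to read this off Lemma~\ref{lem:existence}. Abbreviate $k \defeq |\Rbo(S_0)|$, and recall from Corollary~\ref{cor:wt} that, for each $t$, $T_t$ is the least positive integer with $T_t/(\beta_{t+T_t}^2\gamma_{t+T_t}) \ge C_1/\epsilon^2$. Lemma~\ref{lem:existence} already delivers the conclusion --- existence of some $t_0$ with $S_{t_0+T_{t_0}}=S_{t_0}$ --- with $t_0$ no larger than the least integer $t'$ obeying $t' \ge k\,T_{t'}$; and since this existence statement only weakens if we enlarge the allowed bound on $t_0$, it is enough to show that the $t^*$ defined in the present statement is itself such a bound, i.e.
\begin{equation*}
t^* \ \ge\ k\, T_{t^*}.
\end{equation*}
Given this, Lemma~\ref{lem:existence} immediately produces $t_0 \le t^*$ with $S_{t_0 + T_{t_0}} = S_{t_0}$, which is the claim.

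To obtain $t^* \ge k\,T_{t^*}$ I would bound $T_{t^*}$ from above. By minimality of $T_{t^*}$ it suffices to exhibit a single positive integer $m \le t^*/k$ satisfying $m/(\beta_{t^*+m}^2\gamma_{t^*+m}) \ge C_1/\epsilon^2$; I would take $m = \lfloor t^*/k\rfloor$, a positive integer in the regime of interest (the defining inequality forces $t^* \ge C_1 k\,\beta_{t^*}^2\gamma_{t^*}/\epsilon^2$, so $t^* \gg k$ once $\epsilon$ is small). Since $m \le t^*$ we have $t^*+m \le 2t^*$; invoking the monotonicity of $\beta_t$ and $\gamma_t$ in $t$ (as in the GP confidence analyses of \citet{srinivas10,chowdhury2017kernelized}) together with the sublinear growth of $\gamma_t$ --- hence of $\beta_t^2\gamma_t$, which makes $t \mapsto t/(\beta_t^2\gamma_t)$ essentially nondecreasing over the window $[t^*, 2t^*]$ --- gives $\beta_{t^*+m}^2\gamma_{t^*+m} \le (1+o(1))\,\beta_{t^*}^2\gamma_{t^*}$ as $\epsilon\to 0$. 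Combined with $t^*/(\beta_{t^*}^2\gamma_{t^*}) \ge C_1 k/\epsilon^2$ this yields
\begin{equation*}
\frac{m}{\beta_{t^*+m}^2\gamma_{t^*+m}} \ \ge\ \frac{(1-o(1))\,(t^*/k)}{\beta_{t^*}^2\gamma_{t^*}} \ \ge\ \frac{1-o(1)}{k}\cdot\frac{C_1 k}{\epsilon^2} \ \ge\ \frac{C_1}{\epsilon^2},
\end{equation*}
so $T_{t^*} \le m \le t^*/k$ and hence $t^* \ge k\,T_{t^*}$, as needed.

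I expect the middle step to be the real work: quantifying how little $\beta_t^2\gamma_t$ can increase across $[t^*, t^*+T_{t^*}]$ --- equivalently, that $t/(\beta_t^2\gamma_t)$ does not decrease appreciably there --- which is exactly where the sublinearity of the information gain $\gamma_t$ (\citet{srinivas10}) enters, and then cleaning up the $(1\pm o(1))$ factors and the floor in $\lfloor t^*/k\rfloor$. It is precisely to leave room for this bookkeeping that the global result, Theorem~\ref{thm:safe}, is stated with the extra slack of $|\Rbo(S_0)|+1$ (and $\gamma_{nt^*}$) rather than the bare $|\Rbo(S_0)|$ and $\gamma_{t^*}$ appearing in this corollary.
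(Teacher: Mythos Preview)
Your reduction is exactly the paper's: show that the $t^*$ of this corollary satisfies $t^* \geq |\Rbo(S_0)|\,T_{t^*}$ and then invoke Lemma~\ref{lem:existence}. The paper's own proof is the single line ``direct consequence of combining Lemma~\ref{lem:existence} and Corollary~\ref{cor:wt},'' i.e.\ the same reduction asserted without any further justification.

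Where you go beyond the paper is in trying to actually verify $T_{t^*} \le t^*/|\Rbo(S_0)|$, and here your argument breaks. The step $\beta_{t^*+m}^2\gamma_{t^*+m} \le (1+o(1))\,\beta_{t^*}^2\gamma_{t^*}$ does not follow from sublinearity of $\gamma_t$: sublinearity only gives $\gamma_{2t}/\gamma_t < 2$, not $\gamma_{2t}/\gamma_t \to 1$. For Mat\'ern kernels one has $\gamma_t \asymp t^\alpha(\log t)^\beta$ with $\alpha\in(0,1)$, so $\gamma_{2t}/\gamma_t \to 2^\alpha > 1$, and your displayed chain loses a fixed multiplicative constant. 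Neither the $+1$ in $|\Rbo(S_0)|+1$ nor the replacement of $\gamma_{t^*}$ by $\gamma_{nt^*}$ in Theorem~\ref{thm:safe} absorbs this: the former is additive slack, and the latter (per the remark after Lemma~\ref{lem:wt} and in the paper's proof of this corollary) accounts for $n$ safety functions, not time-doubling. To be fair, the paper does not fill this gap either --- it simply asserts the implication --- so you have correctly located the missing step and matched the paper's level of rigor; but neither version is complete without either constant-factor slack in the statement or a redefinition of $T_t$ that avoids the $t+T_t$ subscripts in Corollary~\ref{cor:wt}.
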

\begin{proof}
This is a direct consequence of combining Lemma~\ref{lem:existence} and Corollary~\ref{cor:wt}. $\gamma_{t^*}$ can be replaced by $\gamma_{nt^*}$ for $n$ safety functions.
\end{proof}

\begin{lemma}
  \label{lem:safe}
  If $f$ is \LLC, then, for any $t \geq 0$, the following holds with probability at least $1-\delta$:
  \begin{align*}
    \forall \*x \in S_t, f(\*x) \geq h.
  \end{align*}
\end{lemma}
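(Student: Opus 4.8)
The plan is a straightforward induction on $t$, carried out on the single high-probability event supplied by Corollary~\ref{cor:rkhs} (equivalently Lemma~\ref{lem:confidence}). First I would fix, once and for all, the event $\mathcal{E}$ on which $f(\*x) \in C_t(\*x)$ for every iteration $t$ and every $\*x \in D$; by Corollary~\ref{cor:rkhs} this event has probability at least $1-\delta$, and since $C_t(\*x) \subseteq [\ell_t(\*x), u_t(\*x)]$, on $\mathcal{E}$ we have in particular $\ell_t(\*x) \leq f(\*x)$ for all $t$ and all $\*x$. Everything below takes place on $\mathcal{E}$, so no further union bound is incurred.

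For the base case $t = 0$, the claim $f(\*x) \geq h$ for all $\*x \in S_0$ is exactly the standing hypothesis on the safe seed set (and is what Line~2 of Algorithm~\ref{alg:safe} encodes via $C_0(\*x) = [h,\infty)$ on $S_0$). For the inductive step, suppose $f(\*x) \geq h$ for all $\*x \in S_{t-1}$, and let $\*x \in S_t$. By the definition of $S_t$ used in Line~10 of Algorithm~\ref{alg:safe}, there exists $\*z \in S_{t-1}$ with
\[
  \ell_t(\*z) - L\, d(\*z, \*x) \geq h .
\]
On $\mathcal{E}$ we have $f(\*z) \geq \ell_t(\*z)$, and since $f$ is \LLC\ we have $f(\*x) \geq f(\*z) - L\, d(\*z, \*x)$. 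Chaining these gives $f(\*x) \geq f(\*z) - L\,d(\*z,\*x) \geq \ell_t(\*z) - L\,d(\*z,\*x) \geq h$, completing the induction. (Note that the inductive hypothesis $f(\*z)\ge h$ is not even needed directly here; it is subsumed by the stronger statement $\ell_t(\*z)\le f(\*z)$, but keeping the induction makes the structure transparent and matches how $S_t$ is built up from $S_{t-1}$.)

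Since the statement of the lemma concerns a single function $f$ satisfying $\|f\|_k^2 \le B$ and Lipschitz-continuity — and per the note at the start of the appendix the argument applies verbatim to each safety function $g_i$ with its own hyperparameters — the only place care is required is bookkeeping of the failure probability: one wants the confidence bound to hold simultaneously for all $t$, which is precisely what Lemma~\ref{lem:rkhs}/Corollary~\ref{cor:rkhs} delivers, so the induction never leaves the event $\mathcal{E}$. I do not anticipate a genuine obstacle; the main thing to get right is to invoke the confidence containment uniformly over $t$ (not per-iteration) and to use the Lipschitz inequality in the direction $f(\*x) \ge f(\*z) - L\,d(\*z,\*x)$, which is exactly the direction that the construction of $S_t$ is designed around.
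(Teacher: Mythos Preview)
Your proposal is correct and follows essentially the same approach as the paper: an induction on $t$, using the definition of $S_t$ to obtain a witness $\*z \in S_{t-1}$ with $\ell_t(\*z) - L\,d(\*z,\*x) \geq h$, then applying the confidence containment (Lemma~\ref{lem:rkhs}) to get $f(\*z) \geq \ell_t(\*z)$ and the $L$-Lipschitz property to conclude $f(\*x) \geq h$. Your explicit fixing of the single event $\mathcal{E}$ and your observation that the inductive hypothesis is not actually used in the step are both accurate refinements of what the paper writes more tersely.
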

\begin{proof}
  We will prove this by induction.
  For the base case $t = 0$, by definition, for any $\*x \in S_0$, $f(\*x) \geq h$.
  
  For the induction step, assume that for some $t \geq 1$, for any $\*x \in S_{t-1}$, $f(\*x) \geq h$.
  Then, for any $\*x \in S_t$, by definition, $\exists \*z \in S_{t-1}$,
  \begin{align*}
    h &\leq \ell_t(\*z) - L d(\*z, \*x)\\
      &\leq f(\*z) - L d(\*z, \*x) \tag*{by Lemma~\ref{lem:rkhs}}\\
      &\leq f(\*x) \tag*{by $L$-Lipschitz-continuity}.
  \end{align*}
\end{proof}

\begin{proof}[Proof of Theorem~\ref{thm:safe}]
  The first part of the theorem is a direct consequence of Lemma~\ref{lem:safe}.
  The second part follows from the combination of Lemma~\ref{lem:expansion} and Lemma~\ref{lem:existence}.
\end{proof}

We then consider the optimization stage of \algo.
\begin{lemma}
  \label{lem:opt}
  Suppose utility function $f$ satisfies $\|f\|^2_k \leq B$, $\delta \in (0,1)$, and noise $n_t$ is $\sigma$-sub-Gaussian. $\beta_t = B + \sigma \sqrt{2(\gamma_{t-1} + 1 +\log(1/\delta))}$ with utility function parameters.  $Y$ the time horizon for optimization stage. Suppose we run the optimization stage of \algo within the expansion stage safe region $\Reps^{T_0}(S_0)$.
Then with probability at least $1-\delta$, the average regret $\bar{r}_Y$ satisfies
$$\bar{r}_Y = \frac{R_Y}{Y} \leq \frac{4\sqrt{2}}{\sqrt{Y}}(B\sqrt{\gamma_Y} + \sigma \sqrt{2\gamma_Y(\gamma_Y + 1 +\log(1/\delta))}).$$
\end{lemma}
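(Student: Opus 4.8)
The plan is to reduce to the cumulative-regret analysis of \gpucb{}/IGP-UCB (\citet{srinivas10,chowdhury2017kernelized}), since the optimization stage of \algo{} simply runs \gpucb{} restricted to the current safe set $S_t$. The only adaptation required is to argue that the constrained optimizer $\*x^*=\argmax_{\*x\in\Reps^{T_0}(S_0)}f(\*x)$ remains a legitimate comparison point at every round of the optimization stage, after which the standard argument goes through verbatim.

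First I would condition on the $1-\delta$ event of Lemma~\ref{lem:rkhs} (equivalently Corollary~\ref{cor:rkhs}): for all $t\ge 1$ and all $\*x\in D$, $|f(\*x)-\mu_{t-1}^f(\*x)|\le\beta_t\sigma_{t-1}^f(\*x)$; everything below is stated on this event. Next I would invoke Theorem~\ref{thm:safe}, which guarantees $\Reps^{T_0}(S_0)\subseteq S_{T_0}$, together with Lemma~\ref{lem:basics}~(iv) (the $S_t$ are non-decreasing), to conclude $\*x^*\in S_t$ throughout the optimization stage, so $\*x^*$ is feasible for the selection rule in Equation~\ref{eq:ucb}. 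The per-round regret then telescopes in the usual way:
\begin{align*}
r_t = f(\*x^*)-f(\*x_t)
&\le \bigl(\mu_{t-1}^f(\*x^*)+\beta_t\sigma_{t-1}^f(\*x^*)\bigr)-f(\*x_t)\\
&\le \bigl(\mu_{t-1}^f(\*x_t)+\beta_t\sigma_{t-1}^f(\*x_t)\bigr)-f(\*x_t)\\
&\le 2\beta_t\sigma_{t-1}^f(\*x_t),
\end{align*}
where the first and last steps use the confidence event and the middle step uses that $\*x_t$ maximizes the upper confidence bound over $S_t\ni\*x^*$.

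Then I would sum over $t=1,\dots,Y$: using monotonicity of $\beta_t$ and Cauchy--Schwarz, $R_Y\le 2\beta_Y\sum_{t}\sigma_{t-1}^f(\*x_t)\le 2\beta_Y\sqrt{Y\sum_t(\sigma_{t-1}^f(\*x_t))^2}$, and then the standard information-gain bound $\sum_{t=1}^Y(\sigma_{t-1}^f(\*x_t))^2\le \tfrac{2}{\log(1+\sigma^{-2})}\gamma_Y$ (cf.\ \citet{srinivas10,chowdhury2017kernelized}). Dividing by $Y$, substituting $\beta_Y\le B+\sigma\sqrt{2(\gamma_Y+1+\log(1/\delta))}$, and performing the same kernel-dependent cleanup of the $1/\log(1+\sigma^{-2})$ factor as in \citet{chowdhury2017kernelized} yields $\bar r_Y\le\frac{4\sqrt2}{\sqrt Y}\bigl(B\sqrt{\gamma_Y}+\sigma\sqrt{2\gamma_Y(\gamma_Y+1+\log(1/\delta))}\bigr)$.

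The main obstacle is not any single estimate but the bookkeeping that makes the reduction rigorous: one must confirm that $\*x^*$ is feasible at every optimization round (handled via Theorem~\ref{thm:safe} and Lemma~\ref{lem:basics}~(iv)) and that the safe region's potential continued expansion during the optimization stage is harmless --- it is, because $S_t$ only grows and the confidence bounds of Lemma~\ref{lem:rkhs} hold uniformly over all of $D$, so both the feasibility of $\*x^*$ and the per-round inequality for $r_t$ are preserved. Matching the exact $4\sqrt2$ prefactor is the only place where the constant from \citet{srinivas10} needs the mild simplification already carried out in \citet{chowdhury2017kernelized}.
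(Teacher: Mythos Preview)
Your proposal is correct and follows essentially the same route as the paper: bound the per-step regret by $2\beta_t\sigma_{t-1}^f(\*x_t)$ via the confidence event and the UCB selection rule, then sum and invoke the information-gain control from \citet{chowdhury2017kernelized}. The paper quotes the bound $\sum_{t=1}^Y\sigma_{t-1}(\*x_t)\le\sqrt{4(Y+2)\gamma_Y}$ directly (their Lemma~4) instead of going through Cauchy--Schwarz and $\sum\sigma_{t-1}^2$, but this is only a repackaging of the same estimate; your explicit verification that $\*x^*\in S_t$ throughout the optimization stage is a point the paper's own proof leaves implicit.
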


\begin{proof}
After $Y$ iterations in the optimization stage, we have 
\begin{align*}
\label{eq:lem4}
\Sigma_{t=1}^Y \sigma_{t-1}(\*x_t) \leq \sqrt{4(Y+2)\gamma_Y}
\end{align*}
following the Lemma 4 of \citet{chowdhury2017kernelized}.

By definition, stepwise regret
\begin{align*}
r_t &= f(\*x^*) - f(\*x_t) \\ 
&\leq \mu_{t-1}(\*x_t) + \beta_t \sigma_{t-1}(\*x_t) - f(\*x_t) \\ 
&\leq 2\beta_t\sigma_{t-1}(\*x_t).
\end{align*}
Substitute $r_t$ and results of Lemma~\ref{lem:opt} into the total regret $R_Y$ then we have
\begin{align*}
R_Y &= \Sigma_{t=1}^Y r_t \\ 
&\leq 2\beta_Y\Sigma_{t=1}^Y \sigma_{t-1}(\*x_t) \\
&\leq 2\beta_Y\sqrt{4(Y+2)\gamma_Y} \\
&\leq 4(B+\sigma\sqrt{2(\gamma_Y+1+\ln(1/\delta))})\sqrt{(Y+2)\gamma_Y} \\
&\leq 4(B+\sigma\sqrt{2(\gamma_Y+1+\ln(1/\delta))})\sqrt{2Y\gamma_Y}
\end{align*}
\end{proof}

\begin{proof}[Proof of Theorem~\ref{thm:opt}]
  Given $Y$ be the smallest positive integer satisfying 
$$\frac{4\sqrt{2}}{\sqrt{Y}}(B\sqrt{\gamma_Y} + \sigma \sqrt{2\gamma_Y(\gamma_Y + 1 +\log(1/\delta))}) \leq \zeta$$
From Lemma~\ref{lem:opt}, we immediately have $\bar{r}_Y = R_Y/Y \leq \zeta$. Then $\exists \ \ \hat{\*x}^*$ in the samples such that $f(\hat{\*x}^*) \geq f(\*x^*) - \zeta$.
\end{proof}

\section{\algo with dueling feedback}
\label{app:duel}

For completeness, we present the pseudocode for \algo under the dueling feedback described in section \ref{sec:related}. In this setting, for the utility function the algorithm receives Bernoulli feedback according to some link function $\phi$ between the current sample point $\*x_t$ and the previous sample point $\*x_{t-1}$. The safety GPs receive real-valued feedback in order to preserve the safety guarantees. We note that this formulation is very similar to that of \kersparring \citep{sui2017multi} but in which only one point is selected at each iteration. If one can sample multiple points at each iteration, \kersparring can be used in place of GP-UCB.

\begin{algorithm}[t]
  \caption{\algo with dueling feedback}
  \label{alg:safe_duel}
{\small
\begin{algorithmic}[1]
\STATE {\bfseries Input:} sample set $D$, $i\in\{1,\dots,n\}$,\\  
           \hspace{3.0em}GP prior for utility function $f$,\\  
           \hspace{3.0em}GP priors for safety functions $g_i$,\\
           \hspace{3.0em}Lipschitz constants $L_i$ for $g_i$,\\
           \hspace{3.0em}safe seed set $S_0$,\\
           \hspace{3.0em}safety threshold $h_i$,\\
           \hspace{3.0em}accuracies $\epsilon$ (expansion), $\zeta$ (optimization),\\
           \hspace{3.0em}link function $\phi$.
  \LET{$C_0^i(\*x)$}{$[h_i, \infty)$, for all $\*x \in S_0$}
  \LET{$C_0^i(\*x)$}{$\mathbb{R}$, for all $\*x \in D \setminus S_0$}
  \LET{$Q_0^i(\*x)$}{$\mathbb{R}$, for all $\*x \in D$}
  \LET{$C_0^f(\*x)$}{$\mathbb{R}$, for all $\*x \in D$}
  \LET{$Q_0^f(\*x)$}{$\mathbb{R}$, for all $\*x \in D$}

  \FOR{$t = 1, \ldots\, T_0$}
    \LET{$C_t^i(\*x)$}{$C_{t-1}^i(\*x) \cap Q_{t-1}^i(\*x)$}
    \LET{$C_t^f(\*x)$}{$C_{t-1}^f(\*x) \cap Q_{t-1}^f(\*x)$}
    \LET{$S_t$}{$\bigcap_{i}{\bigcup_{\*x \in S_{t-1}}\bigsetdef{\*x' \in D}{\ell_t^i(\*x) - L_i d(\*x, \*x') \geq h_i}}$}
    \LET{$G_t$}{$\bigsetdef{\*x \in S_t}{e_t(\*x) > 0}$}
    \IF{$\forall i, \epsilon_t^i < \epsilon$}
    \LET{$\*x_t$}{$\argmax_{\*x \in G_t, i\in\{1,\dots,n\}}w_t^i(\*x)$} 
    \ELSE
    \LET{$\*x_t$}{$\argmax_{\*x\in S_t} \mu_{t-1}^f(\*x) + \beta_t\sigma_{t-1}^f(\*x)$}
    \ENDIF
    \LET{$y_{f, t}$}{$\text{Bernoulli}(\phi(f(\*x_t), f(\*x_{t-1})))$}
    \LET{$y_{i, t}$}{$g_i(\*x_t) + n_{i, t}$}
    \STATE Compute $Q_{f, t}(\*x)$ and $Q_{i, t}(\*x)$, for all $\*x \in S_t$
  \ENDFOR
  \FOR{$t=T_0+1, \ldots, T$}
    \LET{$C_t^f(\*x)$}{$C_{t-1}^f(\*x) \cap Q_{t-1}^f(\*x)$}
    \LET{$\*x_t$}{$\argmax_{\*x\in S_t} \mu_{t-1}^f(\*x) + \beta_t\sigma_{t-1}^f(\*x)$}
    \LET{$y_{f, t}$}{$\text{Bernoulli}(\phi(f(\*x_t), f(\*x_{t-1})))$}
    \LET{$y_{i, t}$}{$g_i(\*x_t) + n_{i, t}$}
    \STATE Compute $Q_{f, t}(\*x)$ and $Q_{i, t}(\*x)$, for all $\*x \in S_t$ 
  \ENDFOR
\end{algorithmic}
}
\end{algorithm}

\end{document}